\theoremstyle{plain}
\newtheorem{theorem}{Theorem}
\newtheorem{lemma}{Lemma}
\newtheorem{definition}{Definition}
\newtheorem{assumption}{Assumption}
\newtheorem{remark}{Remark}
\definecolor{sol_light_blue}{RGB}{38, 139, 210}
\definecolor{sol_blue}{RGB}{38, 139, 210}
\definecolor{nord_blue}{RGB}{38, 139, 210}
\definecolor{sol_green}{RGB}{163, 190, 140}
\definecolor{sol_red}{RGB}{220, 50, 47}
\definecolor{nord_red}{RGB}{250, 190, 192}
\definecolor{nord_green}{RGB}{163, 190, 140}
\definecolor{beer_orange}{RGB}{242, 142, 28}
\definecolor{nordblack}{RGB}{46, 52, 64}
\definecolor{nordred}{RGB}{191, 97, 106}
\definecolor{magenta}{RGB}{215, 10, 185}
\definecolor{nordgreen}{RGB}{163, 190, 140}
\definecolor{nordblue}{RGB}{94, 129, 172}
\definecolor{nordpurple}{RGB}{180, 142, 160}
\definecolor{realtabgreen}{RGB}{44, 160, 44}
\definecolor{realtabpurple}{RGB}{148, 103, 189}
\definecolor{realtaborange}{RGB}{255, 127, 14}
\begin{document}
%
\title{Self-Supervised Multi-Frame \\Neural Scene Flow}

\author{Dongrui Liu, Daqi Liu, Xueqian Li, Sihao Lin, Hongwei xie, Bing Wang, Xiaojun Chang, and Lei Chu
\thanks{Dongrui Liu is with Shanghai Jiao Tong University. Daqi Liu and Xueqian Li are with the University of Adelaide. Sihao Lin is with RMIT University. Xiaojun Chang is with ReLER lab, AAII, University of Technology Sydney, Australia. Lei Chu is with the University of Southern California,}
}

\markboth{Journal}%
{Shell \MakeLowercase{\textit{et al.}}: Bare Demo of IEEEtran.cls for Computer Society Journals}


\IEEEcompsoctitleabstractindextext{%

\begin{abstract}
Neural Scene Flow Prior (NSFP) and Fast Neural Scene Flow (FNSF) have shown remarkable adaptability in the context of large out-of-distribution autonomous driving. Despite their success, the underlying reasons for their astonishing generalization capabilities remain unclear. Our research addresses this gap by examining the generalization capabilities of NSFP through the lens of uniform stability, revealing that its performance is inversely proportional to the number of input point clouds. This finding sheds light on NSFP's effectiveness in handling large-scale point cloud scene flow estimation tasks. Motivated by such theoretical insights, we further explore the improvement of scene flow estimation by leveraging historical point clouds across multiple frames, which inherently increases the number of point clouds. Consequently, we propose a simple and effective method for multi-frame point cloud scene flow estimation, along with a theoretical evaluation of its generalization abilities. Our analysis confirms that the proposed method maintains a limited generalization error, suggesting that adding multiple frames to the scene flow optimization process does not detract from its generalizability. Extensive experimental results on large-scale autonomous driving Waymo Open and Argoverse lidar datasets demonstrate that the proposed method achieves state-of-the-art performance.
\end{abstract}

\begin{keywords}
Multi-Frame Neural Scene Flow, Spatial and Temporal Feature, Generalization Bound, Large-Scale Point Clouds. 
\end{keywords}}

\maketitle
\IEEEdisplaynotcompsoctitleabstractindextext
\IEEEpeerreviewmaketitle

\section{Introduction}

Understanding the 3D world is crucial for the advancement of various critical applications such as autonomous driving \cite{chitta2022transfuser, he2023fear, singh2022road} and robotics \cite{li2023textslam, engel2017direct, wang2022efficient}. In the fields of computer vision and autonomous driving, scene flow estimation stands out as a key endeavor, aiming to determine motion fields within dynamic environments \cite{wang20233d, teed2020raft, menze2015object, ma2019deep, schuster2021deep, maurer2018proflow}. Historically, the analysis of scene flow has predominantly relied on RGB images \cite{teed2020raft, menze2015object, schuster2021deep, maurer2018proflow, jiang2019sense}. However, with the increasing availability of 3D point cloud data, there has been a surge in research efforts to directly estimate scene flow from point clouds \cite{liu2019flownet3d, liu2019meteornet, wang2022matters, zhang2023gmsf, peng2023delflow}.

Recently, the NSFP algorithm, as proposed by Li et al. (2021), has demonstrated its strong capability to handle dense point clouds, containing upwards of 150,000 points, showcasing remarkable generalization capabilities in open-world perception scenarios \cite{najibi2022motion}, which poses significant challenges for existing learning-based approaches \cite{liu2019flownet3d, liu2019meteornet, zhang2023gmsf, peng2023delflow}. In addition, the FNSF, introduced by Li et al. (2023), employs a distance transform strategy \cite{rosenfeld1966sequential, breu1995linear} to greatly significantly accelerate the optimization speed of NSFP and maintain the state-of-the-art performance on out-of-distribution (OOD) autonomous driving scenes. Thus, NSFP and FNSF emerge as potentially powerful and dependable methods for estimating dense scene flow from two consecutive frames of point clouds in the realm of autonomous driving. Despite these advancements, the reasons behind the exceptional performance of NSFP and FNSF in processing dense or large-scale point clouds have yet to be elucidated through theoretical analysis and still remain an intuition or empirical finding. The lack of a deeper understanding of NSFP hinders further progress in the field of neural scene flow estimation.

To address this issue, we conduct a theoretical investigation into the generalization error of NSFP through the framework of uniform stability \cite{bartlett2002rademacher, bousquet2002stability}. Our findings reveal that the upper bound of NSFP's generalization error inversely correlates with the number of input point clouds. In simpler terms, as the number of point clouds increases, NSFP's generalization error decreases. This analysis provides a foundational understanding of why NSFP excels in managing large-scale scene flow optimization tasks. By elucidating the relationship between the number of point clouds and generalization error, we offer a compelling explanation for NSFP's efficacy and reliability in handling complex scene flow estimations.


\begin{figure*}[t]
	\centering
	\includegraphics[width=0.99\textwidth]{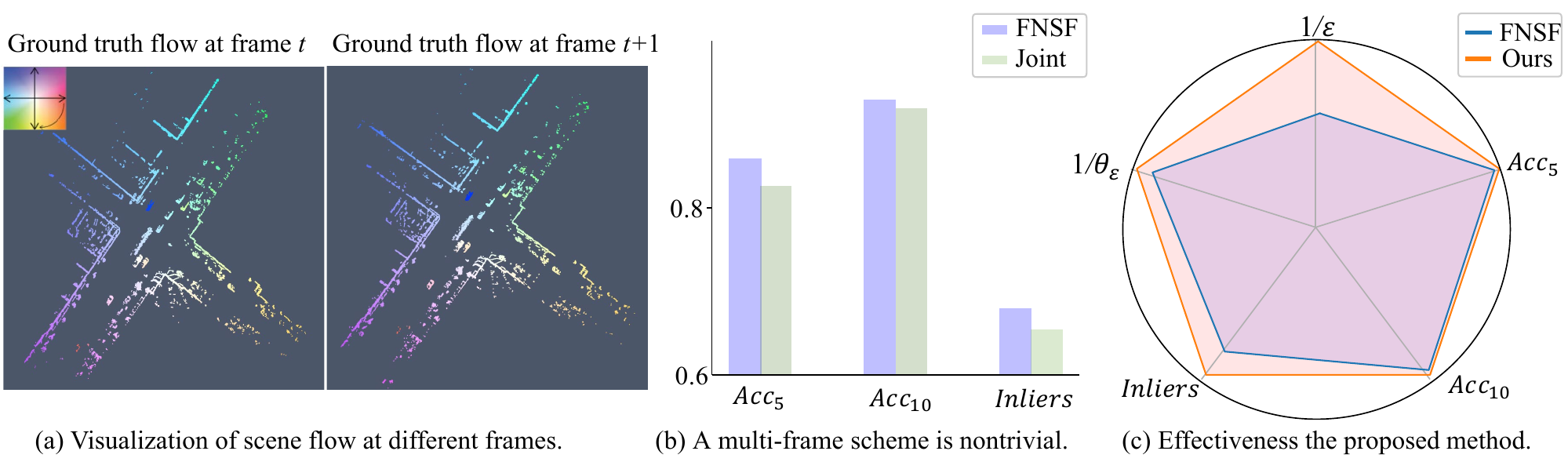}
	\caption{Current learning-based point cloud scene flow methods \cite{liu2019flownet3d, liu2019meteornet, wang2022matters, zhang2023gmsf, peng2023delflow} are trained on synthetic datasets and fail to generalize to realistic autonomous driving scenarios. Fortunately, FNSF \cite{li2023fast} shows powerful generalization ability in large lidar autonomous driving scenes. However, none of these studies exploit the useful temporal information from previous point cloud frames. Extensive studies on optical flow estimation \cite{wulff2017optical, golyanik2017multiframe, janai2018unsupervised, maurer2018proflow, liu2019selflow, stone2021smurf, hur2021self, mehl2023m} and (a) have shown that scene flow in consecutive frames are similar to each other (\emph{i.e.}, the upper left color wheel represents the flow magnitude and direction). To this end, an intuitive approach for exploiting temporal information, namely \textit{Joint}, is to force a single FNSF to jointly estimate the previous flow ($t\text{-}1 \,{\rightarrow}\, t$) and the current flow ($t \,{\rightarrow}\, t\text{+}1$). (b) shows that such an intuitive multi-frame scheme achieves worse performance than two-frame FNSF on the Waymo Open dataset. In this paper, we are the first to propose a simple and effective multi-frame point cloud scene flow estimation scheme. (c) shows that the proposed method achieves state-of-the-art on the Waymo Open dataset. For better visualization, different metrics are separately normalized. Please see Section \ref{sc:exp} for more discussions about evaluation metrics.}
	\label{fig:figure_01}
\end{figure*}


Since increasing the number of point clouds in a frame results in a better performance of NSFP, we raise an interesting question: \textit{Can we improve the scene flow estimation ($t \,{\rightarrow}\, t\text{+}1$) by using previous frames ($t\text{-}1$ and $t$), i.e., increasing the number of point clouds via adding multi-frames?} 
To this end, we seek to exploit the valuable temporal information embedded across multi-frame point clouds to improve the accuracy of two-frame scene flow estimation. Surprisingly, there appears to be a notable gap in research focused on utilizing such valuable temporal information for improving the two-frame point cloud scene flow estimations. Such a gap is particularly unexpected, because the extensive body of research in optical flow estimation \cite{wulff2017optical, janai2018unsupervised, maurer2018proflow, liu2019selflow, stone2021smurf, hur2021self, mehl2023m} have shown the importance of temporal information from previous frames, even amidst rapid motion changes in optical flow. For instance, as illustrated in Figure \ref{fig:figure_01}(a), it is evident that flows between consecutive frames bear a significant resemblance to each other, underscoring the potential benefits of integrating temporal insights into scene flow estimation for two-frame point clouds.

An intuitive solution for exploiting valuable temporal information is to force the FNSF to jointly estimate the previous flow ($t\text{-}1 \,{\rightarrow}\, t$) and the current flow ($t \,{\rightarrow}\, t\text{+}1$). In this way, temporal information can be implicitly encoded by the FNSF. However, Figure \ref{fig:figure_01}(b) shows that such an intuitive method fails to benefit from temporal information and achieves worse performance than the two-frame FNSF, \emph{i.e.,} estimating the flow from frame $t $ to frame $t\text{+}1$. 

In this study, we propose a simple and effective method for multi-frame scene flow estimation. Specifically, we employ two instances of FSNF models to calculate both the forward ($t \,{\rightarrow}\, t\text{+}1$) and backward ($t \,{\rightarrow}\, t\text{-}1$) flows. These flows, naturally opposing in direction, are then reconciled through a motion model that inverts the backward flow. In this way, the inverted backward flow and forward
flow are aligned in the same temporal direction. Finally,
we introduce a temporal fusion module to encode these
flows and predict the final flow. Figure \ref{fig:figure_01}(c) shows that the proposed method outperforms FNSF by a large margin on the Waymo Open dataset. More crucially, we theoretically analyze the generalization error of the
proposed multi-frame scene flow estimation scheme.
We derive that the generalization error of the multi-frame scheme is bounded, which guarantees the convergence
of optimization.

 To the best of our knowledge, we are the first to theoretically analyze NSFP's generalization error and explain its effectiveness in large lidar autonomous driving datasets. Based on the theoretical analysis, we exploit and use valuable temporal information to improve the scene flow estimation, as shown in Figure \ref{fig:figure_01}(c). We expect this study to provide analytical insights and encourage investigation into exploiting temporal information in scene flow estimation. Contributions of this paper can be summarized as follows:
\begin{enumerate}
	\item We present a theoretical analysis of the generalization error of NSFP, demonstrating that this error decreases as the number of point clouds increases. This insight effectively fills the gap in previous theoretical analysis and clarifies why NSFP performs outstandingly well with large-scale point clouds. 
 	\item  We propose a simple and effective strategy for
multi-frame point cloud scene flow estimation, consisting of a forward model, a backward model, a motion model, and a fusion model. We conduct a theoretical examination of the generalization error for the proposed method. The upper bound of this generalization error suggests that the inclusion of multi-frame point clouds within the optimization process does not adversely affect its generalization ability.
	\item The proposed method can be trained in a self-supervised manner and achieves state-of-the-art performance on the real-world Waymo Open and Argoverse datasets.
\end{enumerate}


\begin{table*}[t]
\caption{Notation in this paper.}
\centering
\renewcommand\arraystretch{1.25}
\begin{tabular}{c|l}
Notation & Description  \\ \hline
$S_1$ &  point cloud at time $t\text{-}1$ \\ \hline
$S_2$ & point cloud at time $t$ \\ \hline
$S_3$ &  point cloud at time $t\text{+}1$  \\ \hline
$R$ &  risk function to measure the performance of a learning algorithm \\ \hline
$D$ & point distance function to measure the distance between two point sets \\ \hline
$B$ & Bregman divergence between two functions \\ \hline
$\beta$ & uniform stability of a specific algorithm \\ \hline 
$L$ & loss function of a learning algorithm  \\ \hline 
$g_f \left(\cdot \, ;\: \mathbf{\Theta_f} \right)$& forward model to estimate the forward scene flow ($t \rightarrow t\text{+}1$) \\ \hline 
$g_b \left(\cdot \, ;\: \mathbf{\Theta_b} \right)$&  backward model to estimate the backward scene flow ($t \rightarrow t\text{-}1$) \\ \hline 
$ g_{\rm fusion} \left(\cdot \, ; \mathbf{\Theta_{\rm fusion}} \right)$ & fusion model to  estimate the fused scene flow \\ \hline 
${{\bf p}, {\bf q}} \in $ $S$ & the variables in some point clouds set $S$ \\ \hline 
${{\bf p}_i}/{{\bf q}_j}$ & $i-$/$j-$ th point cloud in the corresponding ensemble \\ \hline 
${\left| \Phi \right|}$/${\left| S \right|}$ & The size of the dataset $\Phi$/$S$ \\ \hline
$\nabla L$ & the gradient of the loss function $L$ \\ 
\hline
$\left\langle \cdot \right\rangle $ & inner product \\ 
\hline
$\mathbb{E} $ & expectation operator \\ 
\hline
\end{tabular}
\label{tab:my_label}
\end{table*}


\section{Related work}
\label{sec:related}

\subsection{3D data processing}

Using deep neural networks to process 3D data has garnered significant interest in recent years. This field primarily consists of two categories: voxel-based and point-based methodologies. Voxel-based methods usually partition the 3D space into a grid of voxels and apply standard 3D convolutions to extract features \cite{maturana2015voxnet, wu20153d}. However, the computational and memory cost of processing voxels is expensive. To this end, many approaches propose efficient data structures and convolution operations, including Kd-tree \cite{klokov2017escape}, octree \cite{riegler2017octnet}, and sparse convolution \cite{choy20194d}. On the other hand, point-based methods process point cloud data directly \cite{qi2017pointnet, qi2017pointnet2}. Furthermore, DGCNN \cite{wang2019dynamic} uses graph neural networks to encode the geometry relationship between different points. Based on these pioneer works, PointCNN \cite{li2018pointcnn}, PointConv \cite{wu2019pointconv}, and KPConv \cite{thomas2019kpconv} are further proposed to enhance industrial applications based on the point cloud, including recognition \cite{rao2022pointglr, liu2022pfmixer, liu2022robust}, detection \cite{zhou2018voxelnet, huang2023fuller}, registration \cite{liu2023self, zhu2021point, li2021pointnetlk, sarode2019pcrnet}, sampling \cite{lang2020samplenet, liu2022pointfp, chen2022point}, generation \cite{chen2021genecgan, lyu2023controllable}, and interpretation \cite{shen2021interpreting}.

\subsection{Scene flow estimation}
Scene flow tasks aim to estimate motion fields from dynamic scenes (typically two different frames). Scene flow estimation from 2D images has been extensively explored in recent years~\cite{teed2020raft, menze2015object, ma2019deep, schuster2021deep, maurer2018proflow, hur2021self, jiang2019sense}. On the other hand, researchers estimate scene flow directly from 3D point clouds via full/self-supervised training schemes \cite{liu2019flownet3d, liu2019meteornet, gu2019hplflownet, wang2020flownet3d++, puy20flot, kittenplon2020flowstep3d, wang2021festa, wu2020pointpwc, vedder2023zeroflow, wang2022matters, zhang2023gmsf, peng2023delflow, lang2023scoop}. Specifically, these methods mainly extract point-based features and compute correspondences between two point clouds. Based on accurate correspondences, these methods achieve superior performance on synthetic KITTI Scene Flow~\cite{menze2015object} and FlyingThings3D~\cite{mayer2016large} datasets. However, they fail to generalize to more realistic and larger autonomous driving scenarios \cite{pontes2020scene, li2021neural, najibi2022motion, dong2022exploiting, jin2022deformation, chodosh2023re}, \emph{e.g.}, Waymo Open~\cite{sun2020scalability} and Argoverse~\cite{chang2019argoverse} datasets. In comparison, NSFP \cite{li2021neural} uses a Multi-Layer Perception (MLP) to estimate the scene flow and demonstrates powerful generalization ability in large-scale autonomous driving scenarios, \emph{e.g.}, processing about 150k+ points. More recently, FNSF \cite{li2023fast} speeds up NSFP by using Distance Transform (DT) without sacrificing the performance on large-scale autonomous driving scenarios. 

However, all the above studies fail to exploit temporal information from previous frames. In this paper, we aim to exploit valuable temporal information from previous frames and focus on large-scale autonomous driving scenes.

\subsection{Multi-frame optical flow}
Extensive studies focus on using multi-frames to estimate optical flow \cite{golyanik2017multiframe, maurer2018proflow, ren2019fusion, schuster2021deep, hur2021self, mehl2023m}. Ren~\textit{et al.}~\cite{ren2019fusion} discovers that performance improvements are relatively smaller when the frame number is more than three. In this way, these studies obtain more accurate results by considering three consecutive frames, which achieves a compromise between temporal information and efficiency \cite{wulff2017optical, janai2018unsupervised, liu2019selflow, stone2021smurf}. Specifically, these methods aim to learn a motion model across different frames, because optical flow fields are temporally smooth and distributed around a low-dimensional linear subspace \cite{irani1999multi, janai2018unsupervised}. In this way, the motion model can exploit valuable information and predict the motion field of the current frame based on previous frames. Then, a fusion module combines the previous and current predictions to estimate a more accurate result in the current frame. However, these previous studies need human annotations. 

In contrast, we aim to exploit and use valuable temporal information to improve the two-frame lidar scene flow estimation in a self-supervised scheme. To this end, we propose a simple and effective fusion strategy.

\section{Approach}
\label{approach}
In this section, we first briefly introduce the background of the neural scene flow estimation. Then we introduce the proposed multi-frame point cloud scene flow estimation scheme in Section \ref{multi_frame}. 
Finally, we theoretically analyze the generalization error of both NSFP and the proposed multi-frame scheme in Section \ref{theoretical_analysis} for better readability and conciseness.

\subsection{Background}

\textbf{Two-frame point cloud scene flow optimization.} 
Let $\mathcal{S}_1$ and $\mathcal{S}_2$ denote the 3D point cloud sampled from a dynamic scene at time $t\text{-}1$ and $t$, respectively. Due to the movement and occlusion, the number of points in $\mathcal{S}_1$ and $\mathcal{S}_2$ are different and not in correspondence, \emph{i.e.}, $|\mathcal{S}_1| \neq|\mathcal{S}_2|$. To model the movement of each point, let $\mathbf{f}\;{\in}\; \mathbb{R}^{3}$ denote a translational vector (or flow vector) of a 3D point $\mathbf{p} \in \mathcal{S}_1$ moving from time $t\text{-}1$ to time $t$, \emph{i.e.}, $\mathbf{p}' = \mathbf{p}+\mathbf{f}$. In this way, the scene flow $\mathcal{F}_1=\{\mathbf{f}_i\}_{i=1}^{|\mathcal{S}_1|}$ is the set of translational vectors for all 3D points in $\mathcal{S}_1$.

Therefore, the optimal scene flow $\mathcal {F}^{*}$ obtains the minimal distance between the two point clouds, $\mathcal{S}_1$ and $\mathcal{S}_2$. Due to the non-rigidity motion field of the dynamic scene, the optimization of the scene flow is inherently unconstrained. To this end, a regularization term $\mbox{C}$ is usually used to constrain the motion field, \emph{e.g.}, Laplacian regularizer \cite{pontes2020scene, zeng20193d}. In this way, the optimization of the scene flow is formulated as follows

\begin{align}
    \mathcal{F}^{*} = \arg \min_{\mathcal{F}_1} \sum_{\mathbf{p} \in \mathcal{S}_1} \mbox{D} \left( \mathbf{p}+\mathbf{f}, \mathcal{S}_2 \right) + \lambda \mbox{C}, \label{eq:optim_02}
\end{align}

\noindent where $\mbox{D}$ is a point distance function, \emph{e.g.}, Chamfer distance \cite{fan2017point}. $\lambda$ is a the coefficient for the regularization term $\mbox{C}$.

\begin{figure*}[t]
	\centering
	\includegraphics[width=0.99\textwidth]{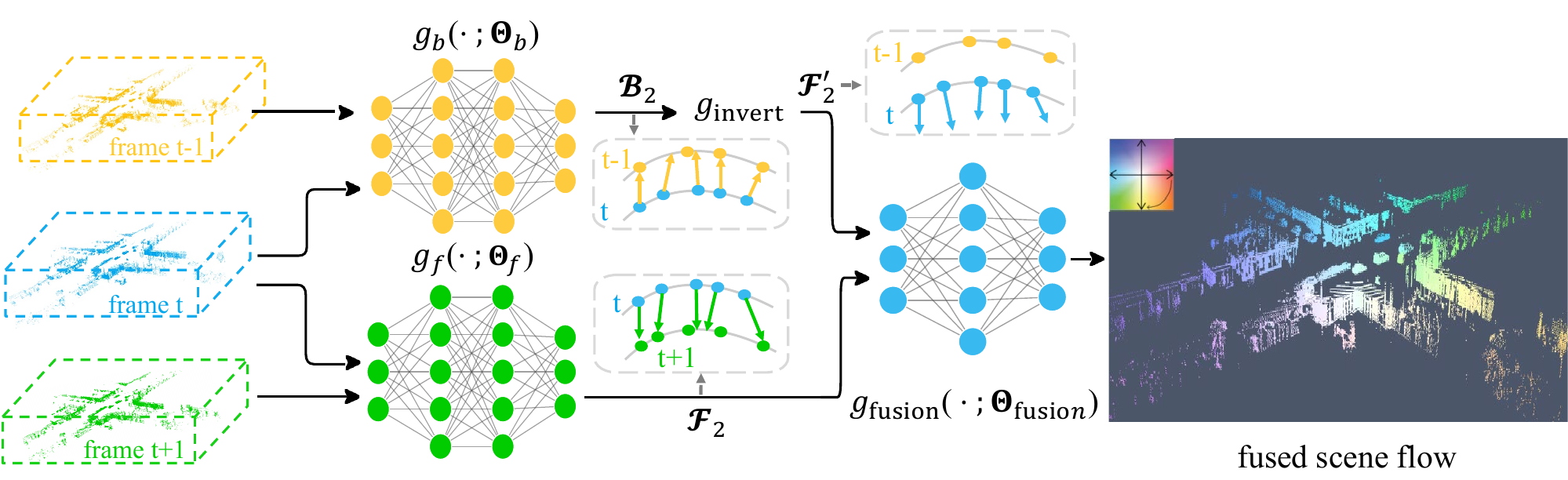}
	\caption{\textbf{Overview of the proposed multi-frame point cloud scene flow estimation scheme.} Given three consecutive frames ($t\text{-}1$, $t$, and $t\text{+}1$), we aim to estimate the scene flow from frame $t$ to frame $t\text{+}1$. Specifically, we use two models $g_f \left(\cdot \, ;\: \mathbf{\Theta_f} \right)$ and $g_b \left(\cdot \, ;\: \mathbf{\Theta_b} \right)$ to predict the forward scene flow $\mathcal{F}_2$ ($t \,{\rightarrow}\, t\text{+}1$) and the backward scene flow $\mathcal{B}_2$ ($t \,{\rightarrow}\, t\text{-}1$), respectively. Furthermore, a motion inverter $g_{\rm invert}$ and a temporal fusion model $ g_{\rm fusion} \left(\cdot \, ; \mathbf{\Theta_{\rm fusion}} \right)$ are used to estimate the fused scene flow. The upper left color wheel in the fused scene flow represents the flow magnitude and direction.}
	\label{fig:fig2}
\end{figure*}

~\\
\textbf{Neural scene flow prior.}
Compared to deep learning-based methods, NSFP utilizes traditional runtime optimization to obtain the optimal weights of the neural network without any prior knowledge or human annotations. NSFP uses the structure of the neural network as an implicit regularization, instead of adding an explicit regularization term:
\begin{align}
    \mathbf{\Theta}^* = \arg \min_{\mathbf{\Theta}} \sum_{\mathbf{p} \in \mathcal{S}_1} \mbox{D} \left( \mathbf{p} + g \left(\mathbf{p}; \mathbf{\Theta} \right), \mathcal{S}_2 \right), \label{eq:optim}
\end{align}

\noindent where $\mathbf{\Theta}$ denotes the weights of the neural network $g$. $\mathbf{p}$ is the input point cloud sampled at time $t\text{-}1$, and the flow vector $\mathbf{f} \;{=}\; g \left(\mathbf{p};\: \mathbf{\Theta} \right)$ represents the output of the neural network $g$. In this way, $\mathbf{f}^{*} \;{=}\; g \left(\mathbf{p};\: \mathbf{\Theta}^{*} \right)$ denotes the optimal flow vector. NSFP implements the neural network $g$ as an MLP and uses Chamfer distance as the loss function to optimize the scene flow.

~\\
\textbf{Fast neural scene flow.}
FNSF uses a correspondence-free loss function \emph{i.e.}, DT~\cite{rosenfeld1966sequential, breu1995linear, danielsson1980euclidean}, as a proxy for the CD loss used in NSFP. In this way, FNSF significantly accelerates the optimization process of the NSFP and becomes an approximately real-time runtime optimization method. More crucially, FNSF maintains the scalability to dense point clouds (about 150k+ points) and state-of-the-art performance on large-scale lidar autonomous driving datasets.

\subsection{Multi-Frame Scene Flow Optimization}
\label{multi_frame}

In this paper, we propose a simple and effective strategy for multi-frame point cloud scene flow estimation. Figure \ref{fig:fig2} demonstrates an overview of the proposed method. Following previous multi-frame optical flow estimation methods \cite{wulff2017optical, janai2018unsupervised, liu2019selflow, stone2021smurf}, we consider three consecutive frames ($t\text{-}1$, $t$, and $t\text{+}1$) and aim to estimate the scene flow from frame $t$ to frame $t\text{+}1$. Specifically, let $\mathcal{S}_1$, $\mathcal{S}_2$, and $\mathcal{S}_3$ be three 3D point clouds sampled from a dynamic scene at time $t\text{-}1$, $t$, and $t\text{+}1$.
The number of points in each point cloud, $|\mathcal{S}_1|$, $|\mathcal{S}_2|$, and $|\mathcal{S}_3|$, are typically different and not in correspondence, \emph{i.e.}, $|\mathcal{S}_1| \neq|\mathcal{S}_2| \neq|\mathcal{S}_3|$. 

Inspired by previous findings and Figure \ref{fig:figure_01}(a) that motion fields across different frames are temporally smooth \cite{irani1999multi, janai2018unsupervised}, we aim to use motion fields in previous frames to improve the estimation of the scene flow in the current frame. An intuitive method is to use a single model to jointly estimate the previous flow ($t\text{-}1 \,{\rightarrow}\, t$) and the current flow ($t \,{\rightarrow}\, t\text{+}1$). However, a single model fails to exploit and benefit from temporal information in such a coarse and intuitive way. Please see experimental results in Table \ref{tab:mean_time_waymo_3d}.

To effectively exploit temporal information from previous frames, we propose to use two models $g_f \left(\mathbf{p};\: \mathbf{\Theta_f} \right)$ and $g_b \left(\mathbf{p};\: \mathbf{\Theta_b} \right)$ to predict the forward scene flow $\mathcal{F}_2=\{\mathbf{f}_i\}_{i=1}^{|\mathcal{S}_2|}$ ($t \,{\rightarrow}\, t\text{+}1$) and the backward scene flow $\mathcal{B}_2=\{\mathbf{b}_i\}_{i=1}^{|\mathcal{S}_2|}$ ($t \,{\rightarrow}\, t\text{-}1$), respectively. The optimization of these two models can be formulated as follows.
\begin{align}
\label{flowf}
    \mathbf{\Theta_f}^* = \arg \min_{\mathbf{\Theta_f}} \sum_{\mathbf{p} \in \mathcal{S}_2} \mbox{D} \left( \mathbf{p} + g_f \left(\mathbf{p}; \mathbf{\Theta_f} \right), \mathcal{S}_3 \right). \\
    \mathbf{\Theta_b}^* = \arg \min_{\mathbf{\Theta_b}} \sum_{\mathbf{p} \in \mathcal{S}_2} \mbox{D} \left( \mathbf{p} + g_b \left(\mathbf{p}; \mathbf{\Theta_b} \right), \mathcal{S}_1 \right).
    \label{eq:optim_main}
\end{align}

~\\
\textbf{Temporal scene flow inversion.} 
Given the forward and the backward scene flow, we aim to further exploit useful temporal information from these flows. However, useful temporal information cannot be directly extracted, because the forward and the backward flow represent the opposite motion field, \emph{i.e.}, $t \,{\rightarrow}\, t\text{+}1$ is opposite to $t \,{\rightarrow}\, t\text{-}1$. In this way, these flows conflict with each other. To this end, we introduce a motion model $g_{\rm invert} \left(\mathbf{b};\: \mathbf{\Theta_{\rm invert}} \right)$ to invert the backward flow $\mathcal{B}_2=\{\mathbf{b}_i\}_{i=1}^{|\mathcal{S}_2|}$ to the flow $\mathcal{F}_2^{'}=\{\mathbf{f}_i^{'}\}_{i=1}^{|\mathcal{S}_2|}$, which has the same direction of the forward flow. Therefore, we have $\mathbf{f}^{'} = g_{\rm invert} \left(\mathbf{b}; \mathbf{\Theta_{\rm invert}} \right)$, where ${\mathbf{b} \in \mathcal{B}_2}$.

~\\
\textbf{Temporal fusion.}
We can fuse the forward and the inverted backward scene flow and exploit useful temporal information. Specifically, we adopt a simple and effective temporal fusion model $g_{\rm fusion} \left(\mathbf{f},\mathbf{f^{'}};\: \mathbf{\Theta_{\rm fusion}} \right)$ to estimate the final scene flow, which is based on multi-frame point clouds. In this way, the fused flow can better overcome occlusions and out-of-view motion, because additional information of the occluded regions can be extracted from different frames/views \cite{maurer2018proflow, schuster2020sceneflowfields++, schuster2021deep}.
\begin{align}
\label{eq:optim_reverse}
    \mathbf{\Theta_{\rm invert}}^*, \mathbf{\Theta}_{\rm fusion}^* = 
    \arg\min_{\mathbf{\Theta_{\rm invert}}, \mathbf{\Theta}_{\rm fusion}}\\ \nonumber
    \sum_{\mathbf{p} \in \mathcal{S}_2} \mbox{D} \left(\mathbf{p} + g_{\rm fusion} \left(\mathbf{f}, \mathbf{f^{'}}; \mathbf{\Theta_{\rm fusion}} \right), \mathcal{S}_3 \right), 
\end{align}

\noindent where $\mathbf{f} = g_f \left(\mathbf{p}; \mathbf{\Theta_f} \right)$ and $\mathbf{f}^{'} = g_{\rm invert} \left(\mathbf{b}; \mathbf{\Theta_{\rm invert}} \right)$.

\subsection{Theoretical Analysis}
\label{theoretical_analysis}

Drawing on the theoretical frameworks proposed by \cite{devroye1979distribution, bartlett2002rademacher, bousquet2002stability, liu2016algorithm}, we adopt uniform stability, as introduced by \cite{bartlett2002rademacher, bousquet2002stability}, as a metric to evaluate the generalization performance of both NSFP and the method proposed in this study. We initiate by presenting the essential technical tools.

\subsubsection{Notations}
\label{usnsfp}
Let $\mathcal{X} \in \mathbb{R}$ and $\mathcal{Y}\in \mathbb{R} $ be the input and output space, we consider the training dataset  
\begin{equation}
\Phi = \left\{ {{z_1}, \cdots ,{z_{\left| \Phi \right|}}} \right\},
\label{nota1}
\end{equation}
where we have ${z_i} = \left\{ {{x_i},{y_i}} \right\}{|_{i = 1, \cdots ,\left| \Phi \right|}}$ and $\mathcal{Z}=\mathcal{X}\times \mathcal{Y}$ drawn independent and identically distributed from some unknown distribution $\Xi$.  The learning algorithm, denoted by $A$, is to learn some function from $\mathcal{Z}^{\left| \Phi \right|}$ into $\mathcal{F}\subset \mathcal{Y}^\mathcal{X}$, mapping the dataset $\Phi$ onto the function $A_\Phi$ from $\mathcal{X}$ to $\mathcal{Y}$. Since we are considering a neural network-based algorithm, $A$ here is related to the learnable neural network parameters. 
We use  $\mathbb{E}_z$ to represent the expectation operator. Given a training dataset $\Phi$, we also consider a modified version by replacing the $i$-th element by a new sample $z_m^{'}$, yielding 
\begin{equation}
{\Phi^m} = \left\{ {{z_1}, \cdots ,{z_{m - 1}},z_m^{'},,{z_{m - 1}}, \cdots ,{z_{\left| \Phi \right|}}} \right\}.
\label{nota2}
\end{equation}
We assume the replacement example $z_m^{'}$ is drawn from $\Xi$ and is independent of $\Phi$.
We use the $risk$ (also known as $generalization \ error$) to measure the performance of a learning algorithm \cite{bartlett2002rademacher, bousquet2002stability}, which can be denoted by 
\begin{equation}
R\left( {A,\Phi} \right) = {\mathbb{E}_z}\left[ {\ell\left( {{A_{\Phi}}, z} \right)} \right],
\label{nota3}
\end{equation}
where $\ell$ represents the loss function of a learning algorithm.  
The classical estimator for the $risk$ of the dataset ${\Phi^m}$ is the $resubstitution \ estimate$ (also known as $empirical \ error$)\cite{bousquet2002stability}, defined as
\begin{equation}
R\left( {A,{\Phi^m}} \right) = \frac{1}{{\left| \Phi \right|}}\sum\limits_{i = 1}^{\left| \Phi \right|} {\ell \left( {{A_{{\Phi^m}}}, z_i} \right)}. 
\label{nota4}
\end{equation}

\subsubsection{Assumptions and Main Tools}

The objective of this study is to establish bounds on the disparity between empirical and generalization errors for particular algorithms, which can be defined in the following.

\begin{definition}
 Given some algorithm $A$, its uniform stability $\beta$ exists with respect to (w.r.t.) its loss function $\ell$ if the flowing holds
 \begin{align}
 \forall \Phi \in Z,\forall m \in \left\{ {1, \cdots ,\left| \Phi \right|} \right\}, \nonumber \\
 \Delta R  \buildrel \Delta \over =  \left| {R \left( {{A},\Phi} \right) -  R \left( {{A},\Phi^m} \right)} \right| \le \beta.
 \label{mato1}
\end{align}
\end{definition}

To bound the uniform stability, we need some probability measure, such as the Bregman divergence \cite{mohri2018foundations}, which is defined by

\begin{definition}
{\bf Bregman divergence:} Let $L: \mathcal{H} \rightarrow \mathbb{R}$ be a strictly convex function that is continuously differentiable on int $\mathcal{H}$. For all distinct $g,h \in \mathcal{H} $, then the Bregman divergence is defined as
\begin{equation}
\label{bregman1}
{B_L}\left( {g||h} \right) = L\left( g \right) - L\left( h \right) - \left\langle {g - h,\nabla L\left( h \right)} \right\rangle 
\end{equation}
\end{definition}

Some key properties of Bregman divergence \cite{mohri2018foundations} are given in the following: 

\begin{lemma}
\label{bregman2}
Bregman divergence is non-negative and additive. For example, give some convex functions $F_1$, $F_2$ and $F=F_1+F_2$, for any $g,h\in \mathcal{H}$, we have 
\begin{equation}
 {B_F}\left( {g||h} \right) = {B_{{F_1}}}\left( {g||h} \right) + {B_{{F_2}}}\left( {g||h} \right)   
\end{equation}
and 
\begin{equation}
{B_F}\left( {g||h} \right) \ge 0.  
\end{equation}

\end{lemma}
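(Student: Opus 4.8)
The plan is to prove the two assertions separately, handling additivity first, since non-negativity of $B_F$ will then drop out as an immediate consequence. For additivity I would simply expand the definition of $B_F(g \| h)$ with $F = F_1 + F_2$, using linearity of function evaluation together with linearity of the gradient operator, $\nabla F(h) = \nabla F_1(h) + \nabla F_2(h)$. Collecting the terms contributed by $F_1$ and by $F_2$ separately reconstructs $L(g)-L(h)-\langle g-h,\nabla L(h)\rangle$ for $L = F_1$ and for $L = F_2$, which is exactly $B_{F_1}(g\|h) + B_{F_2}(g\|h)$. This is a one-line substitution once the gradient is split.

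For non-negativity of $B_L(g\|h)$ when $L$ is strictly convex and continuously differentiable, I would invoke the first-order characterization of convexity: a differentiable convex function lies above each of its tangent hyperplanes, i.e. $L(g) \ge L(h) + \langle g-h, \nabla L(h)\rangle$ for all $g,h \in \mathcal{H}$. Rearranging this inequality is precisely $B_L(g\|h) \ge 0$. If one instead assumes a twice-differentiable $L$, the same conclusion follows from the integral-remainder Taylor expansion $B_L(g\|h) = \int_0^1 (1-\tau)\,(g-h)^\top \nabla^2 L\big(h+\tau(g-h)\big)(g-h)\,d\tau$, which is non-negative because the Hessian of a convex function is positive semidefinite; I would use this route only if second-order smoothness is granted, and otherwise stick with the tangent-hyperplane argument.

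Finally, to obtain $B_F(g\|h) \ge 0$ for $F = F_1 + F_2$, I would note that $F$ is continuously differentiable and (strictly) convex as a sum of such functions, so the non-negativity argument above applies verbatim to $F$; equivalently, it follows at once from the additivity identity together with $B_{F_1}(g\|h) \ge 0$ and $B_{F_2}(g\|h) \ge 0$.

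I do not expect a genuine obstacle here. The only point needing a little care is the bookkeeping of hypotheses: checking that continuous differentiability and strict convexity are inherited by the sum $F_1 + F_2$ so that $B_F$ is well defined on $\mathrm{int}\,\mathcal{H}$, and being explicit that it is the first-order convexity inequality that converts ``$L$ convex'' into ``$B_L \ge 0$.'' Everything else is direct computation.
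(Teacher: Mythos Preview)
Your proposal is correct and follows the standard textbook route: additivity by linearity of the gradient and direct substitution into the definition, non-negativity via the first-order convexity inequality $L(g) \ge L(h) + \langle g-h, \nabla L(h)\rangle$. The paper itself does not supply a proof of this lemma; it simply states the two properties and cites a reference (Mohri et al., \emph{Foundations of Machine Learning}) for them. So there is nothing to compare against beyond noting that your argument is exactly the one found in that reference and in any standard treatment of Bregman divergences.
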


To get the theoretical results, we need some mild assumptions for the statistics of the point clouds and the related neural networks. The interested readers are referred to the works \cite{devroye1979distribution, bousquet2002stability, zhang2002covering, liu2016algorithm} for more applications of the related assumptions. 

\begin{assumption}
\label{asum1}
The point clouds ${\bf P} \in S_2$, ${\bf Q} \in S_3$, and ${\bf R} \in S_3$ contains a finite points and vector spaces of point clouds and neural network (${\bf{\Theta }}$) are bounded,
\begin{align}
{\left| {{S_i}} \right|_{i = 1,2,3}} < \infty ,{\left\| {\bf{P}} \right\|_F} \le {\sigma _P}, \nonumber  \\ {\left\| {\bf{Q}} \right\|_F} \le {\sigma _Q},
{\left\| {\bf{R}} \right\|_F} \le {\sigma _R},{\left\| {\bf{\Theta }} \right\|_F} \le {\sigma _{\bf{\Theta }}}.
\label{as1}
\end{align}
\end{assumption}
In this assumption, we bound the norm of point clouds and related neural networks (forward model), which is reasonable and achievable in practice for point clouds without outliers (substantial value).

To enable the downstream analysis without loss of generality, we assume the minimum of the summation operators are given by 
\begin{equation}
{{{\bf{\hat x}}}_k} = \mathop {\arg \min }\limits_{{\bf{x}} \in {S_3}} \left\| {{\bf{p}} - {\bf{x}}} \right\|_2^2
\label{ea4}
\end{equation}
and
\begin{equation}
{{{\bf{\hat p}}}_l} = \mathop {\arg \min }\limits_{{\bf{y}} \in {S_2}} \left\| {{\bf{q}} - {\bf{y}}} \right\|_2^2 = \mathop {\arg \min }\limits_{{\bf{p}} \in {S_2}} \left\| {{\bf{q}} - \left( {{\bf{\Theta p}} + {\bf{p}}} \right)} \right\|_2^2.
\label{ea5}
\end{equation}

Let ${\bf p}_i$ and ${\bf q}_j$ be the $i$-th and $j$-th point clouds in the $S_2$ and $S_3$, respectively.  
Then, for the NSFP problem, we can rewrite the loss function in Eq. \eqref{flowf} as
\begin{equation}
L\left( {{\bf{\Theta }},{\bf{p}};{S_3}} \right) = {L_p}\left( {{\bf{\Theta }},{\bf{p}};{{{\bf{\hat x}}}_k}} \right) + {L_q}\left( {{\bf{\Theta }},{{{\bf{\hat p}}}_l};{{\bf{q}}_j}} \right),
\label{ea22}
\end{equation}
where 
\[{L_p}\left( {{\bf{\Theta }},{\bf{p}};{{{\bf{\hat x}}}_k}} \right) = \frac{1}{{\left| {{S_2}} \right|}}\sum\limits_{i = 1}^{\left| {{S_2}} \right|} {\left\| {{\bf{\Theta }}{{\bf{p}}_i} + {{\bf{p}}_i} - {{{\bf{\hat x}}}_k}} \right\|_2^2},\]
and
\[{L_q}\left( {{\bf{\Theta }},{{{\bf{\hat p}}}_l};{{\bf{q}}}} \right) = \frac{1}{{\left| {{S_3}} \right|}}\sum\limits_{j = 1}^{\left| {{S_3}} \right|} {\left\| {\left( {{\bf{\Theta }}{{{\bf{\hat p}}}_l} + {{{\bf{\hat p}}}_l}} \right) - {{\bf{q}}_j}} \right\|_2^2} \]

We include the following mild assumptions for the loss functions $L_p$ and $L_q$:
\begin{assumption}
\label{asum2}
For some ${{\sigma _p}}$, for any  ${\bf{\Theta }},{{\bf{\Theta }}_m} \in \Theta$, the loss function ${L_p}$ is bounded by
\begin{equation}
\left| {{L_p}\left( {{\bf{\Theta }},{\bf{p}};{{{\bf{\hat x}}}_k}} \right) - {L_p}\left( {{{\bf{\Theta }}_m},{\bf{p}};{{{\bf{\hat x}}}_k}} \right)} \right| \le {\sigma _P} {\left\| {\left( {{\bf{\Theta }} - {{\bf{\Theta }}_m}} \right){\bf{p}}} \right\|_2}.
\label{as3}
\end{equation}
\end{assumption}
For any network outputs (estimates) ${{\bf{\Theta }}{{{\bf{\hat p}}}_k} + {{{\bf{\hat p}}}_k}}$ and ${{\bf{\Theta }}{{{\bf{\hat p}}}_l} + {{{\bf{\hat p}}}_l}}$, the loss $L_q$ is ${{\sigma _{\bf{\Theta }}} + 1}$ admissible, such that
\begin{equation}
\left| {{L_q}\left( {{\bf{\Theta }},{{{\bf{\tilde p}}}_l};{{\bf{q}}_k}} \right) - {L_q}\left( {{{\bf{\Theta }}_m},{{{\bf{\hat p}}}_l};{{\bf{q}}_k}} \right)} \right| \le \left( {{\sigma _{\bf{\Theta }}} + 1} \right){\left\| {{{{\bf{\tilde p}}}_l} - {{{\bf{\hat p}}}_l}} \right\|_2}
\label{as2}
\end{equation}
Besides, ${L_q}$ is $c$ -strongly convex: 
\begin{equation}
\label{as4}
\left\langle {{{{\bf{\tilde p}}}_l} - {{{\bf{\hat p}}}_l},\nabla {L_q}\left( {.,{{{\bf{\tilde p}}}_l}} \right) - \nabla {L_q}\left( {.,{{{\bf{\hat p}}}_l}} \right)} \right\rangle  \ge c\left\| {{{{\bf{\tilde p}}}_l} - {{{\bf{\hat p}}}_l}} \right\|_2^2.
\end{equation}

\begin{assumption}
\label{asum5}
There exists a subset $\Omega  = \left\{ {{{\bf{d}}_1}, \cdots ,{{\bf{d}}_{\left| \Omega \right|}}} \right\} \subset \left\{ {{{\bf{p}}_1}, \cdots ,{{\bf{p}}_{\left| {{S_2}} \right|}}} \right\}$ such that for any point cloud ${\bf{p}}$ in considered tasks, ${\bf{p}}$ can be reconstructed with a small reconstruction error ($\left\| \eta  \right\| \le \varepsilon$): ${\bf{p}} = \sum\nolimits_{j = 1}^{\left| \Omega \right|} {{\alpha _j}{{\bf{d}}_j} + {\eta _j}}$, where $\alpha \in R$ and $\left\| \alpha  \right\| \le r $.  
\end{assumption}

The above four assumptions were used to bound the network function, and similar assumptions have been used and demonstrated effective in theoretical works \cite{zhang2002covering, liu2016algorithm}. 
We begin our demonstration by presenting an outline of the proofs for our principal theories. We start by utilizing the statistical characteristics (specifically, Bregman convergence) of selected subset point clouds, constructing these subsets from the original point clouds. Subsequently, we delve into examining the upper bounds of these subset point clouds. The pivotal findings are then derived from this theoretical analysis and subsequent calculations.

\subsubsection{Key Theorems}

Our first goal here is to upper-bound the NSFP algorithm as defined in the following:

\begin{definition}
{\bf Uniform Stability of NSFP:} An algorithm is $\beta$ uniformly stable with respect to the loss function $L$ if the following holds with high probability:
\begin{equation}
\Delta R\left( {L,\left\{ {{S_2},{S_3}} \right\}} \right) = \left| {{L}_p\left( {{\bf{\Theta }},{\bf{p}};{{{\bf{\hat x}}}_k}} \right) - {L_p}\left( {{{\bf{\Theta }}_m},{\bf{p}};{{{\bf{\hat x}}}_k}} \right)} \right| \le \beta,
\label{def1}
\end{equation}
where ${{\bf{\Theta }}_m}$ is the optimal forward models of the loss function $L$ over the datasets $S^m_2$ and $S^m_3$ in which we replace its $m$-th sample $ \left( {{{\bf{p}}_m}, {{{\bf{\hat p}}}_l}} \right)$ by a random new point cloud $\left( {{\bf{p}}_m^{'},{\bf{\hat p}}_l^{'}} \right)$. 
\end{definition}

Based on the provided definitions, certain mild assumptions, and comprehensive derivations, we obtain the following theoretical theoretical results.
\begin{theorem}
\label{thmss1}
With the above definitions and some assumptions, for some random sample in $\left\{ {{S_2},{S_3}} \right\}$, with high probability, we have, 
\begin{equation}
{\beta _{{\rm{NSFP}}}} \le \frac{{\left| \Omega  \right|{\sigma _p}}}{4}\left( {rv + \sqrt {{r^2}{v^2} + \frac{{8v{\sigma _{\bf{\Theta }}}\varepsilon }}{{\left| \Omega  \right|}}} } \right) + {\sigma _{\bf{\Theta }}}{\sigma _p}\varepsilon,
\label{thm1}
\end{equation}
where $v = \frac{{{\sigma _p}}}{{\left| {{S_2}} \right|}} + \frac{{{\sigma _{\bf{\Theta }}} + 1}}{{\left| {{S_3}} \right|}}$ and all variables except $S_2$ and $S_3$ can be considered as constants. 
\end{theorem}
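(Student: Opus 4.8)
The plan is to bound $\beta_{\rm NSFP}$ by decomposing the loss $L = L_p + L_q$ (as in Eq.~\eqref{ea22}) and controlling each piece separately using the admissibility and strong-convexity hypotheses of Assumption~\ref{asum2}, then reassembling via the additivity of the Bregman divergence (Lemma~\ref{bregman2}). First I would observe that since $\mathbf{\Theta}$ and $\mathbf{\Theta}_m$ are the minimizers of $L$ over $\{S_2,S_3\}$ and $\{S_2^m,S_3^m\}$ respectively, I can write a Bregman-divergence inequality: the nonnegativity and additivity of $B_L$ applied at the two optima, combined with the fact that at a minimizer the gradient terms vanish (or are handled by the first-order optimality conditions), yields a two-sided control of the form $B_{L_q}(\tilde{\mathbf p}_l \| \hat{\mathbf p}_l) + B_{L_q}(\hat{\mathbf p}_l \| \tilde{\mathbf p}_l) \le$ (a perturbation term coming from replacing one sample). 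Using $c$-strong convexity \eqref{as4}, the left side is bounded below by $2c\|\tilde{\mathbf p}_l - \hat{\mathbf p}_l\|_2^2$ (or $c\|\cdot\|_2^2$ up to constants), while the right side is bounded above using the $(\sigma_{\mathbf\Theta}+1)$-admissibility \eqref{as2} by something linear in $\|\tilde{\mathbf p}_l - \hat{\mathbf p}_l\|_2$ times $\tfrac{1}{|S_3|}$, plus the $L_p$ contribution scaling like $\tfrac{\sigma_P}{|S_2|}$. Solving the resulting quadratic inequality $c\,\|\tilde{\mathbf p}_l - \hat{\mathbf p}_l\|_2^2 \le v\,\|\tilde{\mathbf p}_l - \hat{\mathbf p}_l\|_2 \cdot(\text{stuff})$ for $\|\tilde{\mathbf p}_l - \hat{\mathbf p}_l\|_2$ produces the characteristic $\tfrac14(rv + \sqrt{r^2v^2 + \cdots})$ shape, with $v = \tfrac{\sigma_p}{|S_2|} + \tfrac{\sigma_{\mathbf\Theta}+1}{|S_3|}$ emerging as the natural aggregate of the two per-sample perturbation scales.

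Next I would convert the bound on the parameter/point perturbation $\|\tilde{\mathbf p}_l - \hat{\mathbf p}_l\|_2$ (equivalently $\|(\mathbf\Theta - \mathbf\Theta_m)\mathbf p\|_2$) into a bound on $\Delta R = |L_p(\mathbf\Theta,\mathbf p;\hat{\mathbf x}_k) - L_p(\mathbf\Theta_m,\mathbf p;\hat{\mathbf x}_k)|$. By Assumption~\ref{asum2}, Eq.~\eqref{as3}, this difference is at most $\sigma_P\|(\mathbf\Theta - \mathbf\Theta_m)\mathbf p\|_2$. Here is where Assumption~\ref{asum5} enters: I would expand $\mathbf p = \sum_{j=1}^{|\Omega|}\alpha_j \mathbf d_j + \eta_j$ with $\|\alpha\|\le r$ and $\|\eta\|\le\varepsilon$, so that $\|(\mathbf\Theta-\mathbf\Theta_m)\mathbf p\|_2 \le \sum_j |\alpha_j|\,\|(\mathbf\Theta-\mathbf\Theta_m)\mathbf d_j\|_2 + \|(\mathbf\Theta-\mathbf\Theta_m)\eta\|_2$. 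The first sum is controlled by the per-basis-element perturbation (each $\mathbf d_j \in \Omega \subset S_2$, so the quadratic-inequality bound above applies, giving the $|\Omega|$ factor and the $\varepsilon/|\Omega|$ inside the square root), and the second term is bounded using $\|\mathbf\Theta - \mathbf\Theta_m\|_F \le 2\sigma_{\mathbf\Theta}$ (Assumption~\ref{asum1}) times $\varepsilon$, producing the additive tail term $\sigma_{\mathbf\Theta}\sigma_p\varepsilon$. Collecting the multiplicative constant $\tfrac{|\Omega|\sigma_p}{4}$ in front of the quadratic-root expression and the additive $\sigma_{\mathbf\Theta}\sigma_p\varepsilon$ gives exactly \eqref{thm1}.

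The main obstacle I anticipate is the careful bookkeeping in the Bregman step: making precise the claim that comparing the two optima yields a clean quadratic inequality requires that the cross terms involving $\nabla L_p$ and $\nabla L_q$ at the optima either cancel by first-order optimality or can be absorbed into the admissibility bounds, and this hinges on treating $L_q$ as a function of the ``reconstructed point'' $\hat{\mathbf p}_l$ rather than directly of $\mathbf\Theta$ — the change of variables in Eq.~\eqref{ea5} is doing subtle work here. A secondary subtlety is that the replacement in $\Phi^m$ affects $\hat{\mathbf x}_k$ and $\hat{\mathbf p}_l$ (the argmin selections) as well as the explicit samples, so I would need the statistics-boundedness of Assumption~\ref{asum1} to ensure these induced changes remain controlled; I would handle the argmin sensitivity by a standard argument that on the replaced coordinate the loss contribution is bounded in absolute value by $O(1/|S_i|)$ regardless of which nearest neighbor is selected. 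Once these two points are pinned down, the remaining steps — strong convexity lower bound, admissibility upper bound, solving the quadratic, and the Assumption~\ref{asum5} expansion — are routine algebra, and the ``with high probability'' qualifier simply reflects that the i.i.d.\ sampling of $z_m'$ makes the perturbation terms concentrate.
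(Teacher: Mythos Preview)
Your proposal is essentially the paper's proof: Bregman divergence at the two optima with first-order optimality to kill the gradient terms, strong convexity of $L_q$ for the lower bound, admissibility for the upper bound, a quadratic inequality, and the dictionary expansion of Assumption~\ref{asum5} to produce the $|\Omega|$ and $\varepsilon$ factors. The one ordering difference is that the paper introduces the subset $\Omega$ \emph{before} the Bregman step, defining a restricted loss $L_q^\Omega$ and carrying out the strong-convexity lower bound directly as $\tfrac{c}{|\Omega|}\sum_i\|(\mathbf\Theta-\mathbf\Theta_m)\mathbf d_i\|_2^2$; the quadratic is then in $U=\sum_i\|(\mathbf\Theta-\mathbf\Theta_m)\mathbf d_i\|_2$, which is what places $|\Omega|$ outside and $\varepsilon/|\Omega|$ inside the radical cleanly. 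Your plan of running the Bregman argument for a generic point and then expanding over $\Omega$ afterwards would reach the same destination but makes that bookkeeping slightly messier, so when you execute the proof it is worth switching to the paper's order at that step.
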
 

\begin{proof}
{\bf Proof sketch:} To define limits on the differences between empirical errors and generalization errors for specific algorithms, we initially explore the statistical correlation between the subset and original point clouds. This exploration enables us to ascertain an upper limit for forward model errors. Subsequently, we focus on the Bregman divergence, utilizing it as a pivotal statistical metric, from which we deduce the crucial inequality. This process culminates in the formulation of a comprehensive proof of our theorems. It's important to mention that, although our analysis is based on a linear network model, empirical evidence from case studies has shown that it performs well in both linear and nonlinear network models. 

{\bf Statistical Relationship between the Subset and Original Point Clouds:}
With Assumption \ref{asum2} and Cauchy-Schwarz inequality, we have 
\begin{equation}
\label{subsetEQ}
\begin{array}{l}
\left| {{L_p}\left( {{\bf{\Theta }},{\bf{p}};{{{\bf{\hat x}}}_k}} \right) - {L_p}\left( {{{\bf{\Theta }}_m},{\bf{p}};{{{\bf{\hat x}}}_k}} \right)} \right|\\
 \le {\sigma _p}{\left\| {\left( {{\bf{\Theta }} - {{\bf{\Theta }}_m}} \right){\bf{p}}} \right\|_2}\\
 \le \sqrt {\sum\limits_j {\alpha _j^2} } \sqrt {\sum\limits_{j = 1}^{\left| \Omega \right|} {\left\| {\left( {{\bf{\Theta }} - {{\bf{\Theta }}_m}} \right){{\bf{d}}_j}} \right\|_2^2} }  + {\left\| {\left( {{\bf{\Theta }} - {{\bf{\Theta }}_m}} \right)} \right\|_2}{\left\| \eta  \right\|_2}\\
 \le r\sqrt {\sum\limits_{j = 1}^{\left| \Omega \right|} {\left\| {\left( {{\bf{\Theta }} - {{\bf{\Theta }}_m}} \right){{\bf{d}}_j}} \right\|_2^2} }  + \frac{{2{\sigma _{\bf{\Theta }}}\varepsilon }}{{\left| {{S_2}} \right|}}
\end{array}
\end{equation}
Then our goal is to bound the ${\left\| {\left( {{\bf{\Theta }} - {{\bf{\Theta }}_m}} \right){{\bf{d}}}} \right\|}_2$, which is based on the Bregman divergence between the point clouds $\Phi$ and its subset $\Omega$.

With the definitions in Section \ref{usnsfp}, we know that the loss function $L$ and $L_m$ are defined over the original dataset $S_2$ and $S_3$. For the same loss functions defined over the subset $\Omega$, we can denote them as $L^\Omega$ and $L^\Omega_m$ for notation compactness. Considering the non-negativity and additivity of the Bregman divergence (Lemma \ref{bregman2}), we can have
\begin{equation}
\small
\label{eq23}
{B_{{L_q}}}\left( {{{\bf{\Theta }}_m}||{\bf{\Theta }}} \right) \le {B_L}\left( {{{\bf{\Theta }}_m}||{\bf{\Theta }}} \right),{B_{{L_q}}}\left( {{{\bf{\Theta }}_m}||{\bf{\Theta }}} \right) \le {B_{{L_m}}}\left( {{{\bf{\Theta }}_m}||{\bf{\Theta }}} \right)    
\end{equation}
and 
\begin{equation}
\label{eq24}
\begin{array}{l}
{B_{L_q^\Omega }}\left( {{{\bf{\Theta }}_m}||{\bf{\Theta }}} \right) + {B_{L_q^\Omega }}\left( {{\bf{\Theta }}||{{\bf{\Theta }}_m}} \right)\\
 \le \kappa \left[ {{B_{{L_q}}}\left( {{{\bf{\Theta }}_m}||{\bf{\Theta }}} \right) + {B_{{L_q}}}\left( {{\bf{\Theta }}||{{\bf{\Theta }}_m}} \right)} \right]
\end{array},
\end{equation}
for some $\kappa>0$.

{\bf Key Inequalities:} We concentrate on establishing the critical inequalities between the Bregman divergence of the initial point clouds and the divergence observed in their subsets.  
We start by showing the key inequality of ${B_{L_q^\Omega}}\left( {{{\bf{\Theta }}_m}||{\bf{\Theta }}} \right) + {B_{L_q^\Omega}}\left( {{\bf{\Theta }}||{{\bf{\Theta }}_m}} \right)$:
\begin{equation}
\small
\begin{array}{*{20}{l}}
{{B_{L_q^\Omega }}\left( {{{\bf{\Theta }}_m}||{\bf{\Theta }}} \right) + {B_{L_q^\Omega }}\left( {{\bf{\Theta }}||{{\bf{\Theta }}_m}} \right)}\\
\begin{array}{l}
 = \frac{1}{{\left| \Omega  \right|}}\sum\limits_{i = 1}^{\left| \Omega  \right|} {\left\langle {{\bf{\Theta }} - {{\bf{\Theta }}_m},\nabla {L_q}\left( {{\bf{\Theta }},{{{\bf{\hat p}}}_l};{{\bf{q}}_i}} \right){\bf{d}}_i^T} \right\rangle } \\
 - \frac{1}{{\left| \Omega  \right|}}\sum\limits_{i = 1}^{\left| \Omega  \right|} {\left\langle {{\bf{\Theta }} - {{\bf{\Theta }}_m},\nabla {L_q}\left( {{{\bf{\Theta }}_m},{{{\bf{\hat p}}}_l};{{\bf{q}}_i}} \right){\bf{d}}_i^T} \right\rangle } 
\end{array}\\
{ = \frac{1}{{\left| \Omega  \right|}}\sum\limits_{i = 1}^{\left| \Omega  \right|} {\left\langle {\left( {{\bf{\Theta }} - {{\bf{\Theta }}_m}} \right){{\bf{d}}_i},\nabla {L_q}\left( {{\bf{\Theta }},{{{\bf{\hat p}}}_l};{{\bf{q}}_i}} \right) - \nabla {L_q}\left( {{{\bf{\Theta }}_m},{{{\bf{\hat p}}}_l};{{\bf{q}}_i}} \right)} \right\rangle } }\\
{ \ge \frac{c}{{\left| \Omega  \right|}}\sum\limits_{i = 1}^{\left| \Omega  \right|} {\left\| {\left( {{\bf{\Theta }} - {{\bf{\Theta }}_m}} \right){{\bf{d}}_i}} \right\|_2^2} }
\end{array}
\end{equation}
where the inequality holds from Assumptions \ref{asum2} and results given in Eq. \eqref{eq23}. Since the mean square error is considered, we have $c=2$.

Since ${{\bf{\Theta }}_m}$ and ${\bf{\Theta }}$ are the optimal forward models of $L$ and $L_m$, we have ${\nabla _L}\left( {\bf{\Theta }} \right) = 0$ and ${\nabla _{L_m}}\left( {\bf{\Theta }}_m \right) = 0$. Then with the definition in Eq. \eqref{bregman1}, we obtain 
\begin{equation}
\label{keuieq1}
\begin{array}{l}
{B_L}\left( {{{\bf{\Theta }}_m}||{\bf{\Theta }}} \right) + {B_{{L_m}}}\left( {{\bf{\Theta }}||{{\bf{\Theta }}_m}} \right)\\
 = L\left( {{{\bf{\Theta }}_m}} \right) - L\left( {\bf{\Theta }} \right) + {L_m}\left( {\bf{\Theta }} \right) - {L_m}\left( {{{\bf{\Theta }}_m}} \right)\\
 = \left( {L\left( {{{\bf{\Theta }}_m}} \right) - {L_m}{{\bf{\Theta }}_m}} \right) + \left( {{L_m}\left( {\bf{\Theta }} \right) - L\left( {\bf{\Theta }} \right)} \right)\\
 = {\textstyle{1 \over {\left| {{S_2}} \right|}}}\left[ {{L_p}\left( {{\bf{\Theta }},{{\bf{p}}_m};{{{\bf{\hat x}}}_k}} \right) - {L_p}\left( {{{\bf{\Theta }}_m},{{\bf{p}}_m};{{{\bf{\hat x}}}_k}} \right)} \right] \\ + {\textstyle{1 \over {\left| {{S_2}} \right|}}}\left[ {{L_p}\left( {{\bf{\Theta }},{\bf{p}}_m^{'};{{{\bf{\hat x}}}_k}} \right) - {L_p}\left( {{{\bf{\Theta }}_m},{\bf{p}}_m^{'};{{{\bf{\hat x}}}_k}} \right)} \right]\\
 + {\textstyle{1 \over {\left| {{S_3}} \right|}}}\left[ {{L_q}\left( {{\bf{\Theta }},{{{\bf{\hat p}}}_l};{{\bf{q}}_i}} \right) - {L_q}\left( {{{\bf{\Theta }}_m},_l^{'}{{{\bf{\hat p}}}_l};{\bf{q}}_i^{'}} \right)} \right] \\ + {\textstyle{1 \over {\left| {{S_3}} \right|}}}\left[ {{L_q}\left( {{\bf{\Theta }},{\bf{\hat p}};{{\bf{q}}_i}} \right) - {L_q}\left( {{{\bf{\Theta }}_m},{\bf{\hat p}}_l^{'};{\bf{q}}_i^{'}} \right)} \right]
 \end{array}.
\end{equation}

Considering Eq. \eqref{eq24} and Assumptions \ref{asum1}-\ref{asum5}, we get
\begin{equation}
\small
\label{keuieq2}
\begin{array}{l}
{B_L}\left( {{{\bf{\Theta }}_m}||{\bf{\Theta }}} \right) + {B_{{L_m}}}\left( {{\bf{\Theta }}||{{\bf{\Theta }}_m}} \right)\\
\le \kappa \left( {{\textstyle{{{\sigma _p}} \over {\left| {{S_2}} \right|}}} + {\textstyle{{{\sigma _{\bf{\Theta }}} + 1} \over {\left| {{S_3}} \right|}}}} \right)\left( {{{\left\| {\left( {{\bf{\Theta }} - {{\bf{\Theta }}_m}} \right){{\bf{p}}_m}} \right\|}_2} + {{\left\| {\left( {{\bf{\Theta }} - {{\bf{\Theta }}_m}} \right){\bf{p}}_m^{'}} \right\|}_2}} \right)\\
\le \kappa \left( {{\textstyle{{{\sigma _p}} \over {\left| {{S_2}} \right|}}} + {\textstyle{{{\sigma _{\bf{\Theta }}} + 1} \over {\left| {{S_3}} \right|}}}} \right)\left( {r{{\left\| {\left( {{\bf{\Theta }} - {{\bf{\Theta }}_m}} \right){\bf{d}}} \right\|}_2} + \frac{{2{\sigma _{\bf{\Theta }}}\varepsilon }}{{\left| \Omega \right|}}} \right)
 \end{array}
\end{equation}
The last inequality in Eq. \eqref{keuieq2} holds with some mathematical manipulation of the reconstruction function shown in Assumption \ref{asum5} and the inequality shown in Eq. \eqref{subsetEQ}. 

{\bf Proof Completing:} Let $U = \sum\limits_{i = 1}^{\left| {{\Omega}} \right|} {{{\left\| {\left( {{\bf{\Theta }} - {{\bf{\Theta }}_m}} \right){{\bf{d}}_i}} \right\|}_2}}$, comparing the inequalities shown in Eq. \eqref{keuieq1} and 
 Eq. \eqref{keuieq2}, we can get 
\begin{equation}
\begin{array}{l}
\frac{2}{{\left| {{\Omega}} \right|}}\sum\limits_{i = 1}^{\left| {{\Omega}} \right|} {\left\| {\left( {{\bf{\Theta }} - {{\bf{\Theta }}_m}} \right){{\bf{d}}_i}} \right\|_2^2}  \\ \le \kappa \left( {\frac{{{\sigma _p}}}{{\left| {{S_2}} \right|}} + \frac{{{\sigma _{\bf{\Theta }}} + 1}}{{\left| {{S_3}} \right|}}} \right)\left( {r{{\left\| {\left( {{\bf{\Theta }} - {{\bf{\Theta }}_m}} \right){\bf{d}}} \right\|}_2} + \frac{{2{\sigma _{\bf{\Theta }}}\varepsilon }}{{\left| \Omega \right|}}} \right)
 \end{array}
\end{equation}
or equivalently,
\begin{equation}
\frac{2}{{\left| \Omega \right|}}{U^2} \le \kappa \left( {\frac{{{\sigma _p}}}{{\left| {{S_2}} \right|}} + \frac{{{\sigma _{\bf{\Theta }}} + 1}}{{\left| {{S_3}} \right|}}} \right)\left( {rU + \frac{{2{\sigma _{\bf{\Theta }}}\varepsilon }}{{\left| \Omega \right|}}} \right),   
\end{equation}
which can be further simplified by 
\begin{equation}
\begin{array}{l}
U \le \frac{{\left| \Omega  \right|}}{4}\kappa r\left( {\frac{{{\sigma _p}}}{{\left| {{S_2}} \right|}} + \frac{{{\sigma _{\bf{\Theta }}} + 1}}{{\left| {{S_3}} \right|}}} \right)\\
 + \frac{{\left| \Omega  \right|}}{4}\sqrt {{\kappa ^2}{r^2}{{\left( {\frac{{{\sigma _p}}}{{\left| {{S_2}} \right|}} + \frac{{{\sigma _{\bf{\Theta }}} + 1}}{{\left| {{S_3}} \right|}}} \right)}^2} - \frac{{8\kappa {\sigma _{\bf{\Theta }}}\varepsilon }}{{{{\left| \Omega  \right|}^2}}}\left( {\frac{{{\sigma _p}}}{{\left| {{S_2}} \right|}} + \frac{{{\sigma _{\bf{\Theta }}} + 1}}{{\left| {{S_3}} \right|}}} \right)} 
\end{array}.
\end{equation}
Putting the above results into Eq. \eqref{def1} gives
\begin{equation}
\begin{array}{*{20}{l}}
{\left| {{L_p}\left( {{\bf{\Theta }},{\bf{p}};{{{\bf{\hat x}}}_k}} \right) - {L_p}\left( {{{\bf{\Theta }}_m},{\bf{p}};{{{\bf{\hat x}}}_k}} \right)} \right|}\\
{ \le {\sigma _p}{{\left\| {\left( {{\bf{\Theta }} - {{\bf{\Theta }}_m}} \right){\bf{p}}} \right\|}_2}}\\
{ \le {\sigma _p}\left( {r{{\left\| {\left( {{\bf{\Theta }} - {{\bf{\Theta }}_m}} \right){\bf{d}}} \right\|}_2} + \frac{{2{\sigma _{\bf{\Theta }}}\varepsilon }}{{\left| \Omega  \right|}}} \right)}\\
\begin{array}{l}
 \le \frac{{\left| \Omega  \right|{\sigma _p}r}}{4}\left( {\frac{{{\sigma _p}}}{{\left| {{S_2}} \right|}} + \frac{{{\sigma _{\bf{\Theta }}} + 1}}{{\left| {{S_3}} \right|}}} \right) + {\sigma _{\bf{\Theta }}}{\sigma _p}\varepsilon \\
 + \frac{{\left| \Omega  \right|{\sigma _p}}}{4}\sqrt {{r^2}{{\left( {\frac{{{\sigma _p}}}{{\left| {{S_2}} \right|}} + \frac{{{\sigma _{\bf{\Theta }}} + 1}}{{\left| {{S_3}} \right|}}} \right)}^2} + \left( {\frac{{{\sigma _p}}}{{\left| {{S_2}} \right|}} + \frac{{{\sigma _{\bf{\Theta }}} + 1}}{{\left| {{S_3}} \right|}}} \right)\frac{{8{\sigma _{\bf{\Theta }}}\varepsilon }}{{\left| \Omega  \right|}}} 
\end{array}
\end{array}.
\end{equation}
which completes the proof of Theorem \ref{thmss1}.
\end{proof}

Theorem \ref{thmss1} shows that the generalization error of NSFP decreases with the reciprocal of the number of point clouds (${\left| {{S_2}} \right|}$ and ${\left| {{S_3}} \right|}$), demonstrating its superior performance in the large-scale scene flow estimation (please see Tables \ref{tab:mean_time_waymo_3d} and \ref{tab:mean_time_argoverse_3d}), where $\left| {{S_2}} \right| \to \infty$ and $\left| {{S_3}} \right| \to \infty$, demonstrating the effectiveness of NSFP in the large-scale settings. We further provide the analysis for the MNSF method in the following.
 
\begin{theorem}
\label{thmss2}
Let {\small ${{\bf{\Theta }}_{{\bf{fusion}}}} = {\left[ {{\bf{\Theta }}_1^{\top},{\bf{\Theta }}_2^{\top}} \right]^\top}$} denote the parameters of the fusion model. For the proposed multi-frame scheme (MNSF), with high probability, its uniform stability ($\beta_{\rm MNSF}$) is bounded by 
\begin{equation}
\beta_{\rm MNSF} \le \beta_{\rm NSFP} + O\left( {{\textstyle{1 \over {\left| {{S_2}} \right|}}}} \right) , 
\end{equation}
where $\footnotesize O\left( {{\textstyle{1 \over {\left| {{S_2}} \right|}}}} \right)= \frac{{4{\kappa ^2}\sigma _{{S_3}}^2}}{{\lambda \left| {{S_2}} \right|}} + \left( {\frac{{8{\kappa ^2}\sigma _{{S_3}}^2}}{\lambda } + 2{\sigma _{{S_3}}}} \right)\sqrt {\frac{{\ln {1 \mathord{\left/
 {\vphantom {1 \delta }} \right.
 \kern-\nulldelimiterspace} \delta }}}{{2\left| {{S_2}} \right|}}}$ and $ \lambda  = {\textstyle{{\left\| {{{\bf{\Theta }}_2}{{\bf{\Theta }}_b}} \right\|_2^2} \over {\left\| {{{\bf{\Theta }}_1}{{\bf{\Theta }}_f} + {\bf{I}}} \right\|_2^2}}}$. Variables $\kappa$, $\sigma_{S_3}$, and $\delta$ can be considered as constants.
\end{theorem}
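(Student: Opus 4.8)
The plan is to reduce the stability analysis of the fused loss to the NSFP bound of Theorem~\ref{thmss1} plus a controllable residual of order $1/|S_2|$. Since the fusion model is linear with parameters ${\bf\Theta}_{\rm fusion}=[{\bf\Theta}_1^\top,{\bf\Theta}_2^\top]^\top$ and, as in Theorem~\ref{thmss1}, the forward/backward/inversion models are linear maps, the fused prediction at a point ${\bf p}\in S_2$ can be written as $({\bf\Theta}_1{\bf\Theta}_f+{\bf I}){\bf p}+{\bf\Theta}_2{\bf\Theta}_b{\bf p}$, the inversion being absorbed into ${\bf\Theta}_2$. First I would substitute this into the MNSF objective Eq.~\eqref{eq:optim_reverse} and expand each squared residual as $\|{\bf u}+{\bf w}\|_2^2=\|{\bf u}\|_2^2+2\langle{\bf u},{\bf w}\rangle+\|{\bf w}\|_2^2$ with ${\bf u}=({\bf\Theta}_1{\bf\Theta}_f+{\bf I}){\bf p}-\hat{\bf x}$ the forward part and ${\bf w}={\bf\Theta}_2{\bf\Theta}_b{\bf p}$ the backward correction. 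The $\|{\bf u}\|_2^2$ terms form a loss of exactly the NSFP shape of Eq.~\eqref{ea22} with effective parameter ${\bf\Theta}_1{\bf\Theta}_f$, so running the argument behind Theorem~\ref{thmss1} on this branch contributes the leading term $\beta_{\rm NSFP}$.

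It then remains to bound the sensitivity of the cross term $2\langle{\bf u},{\bf w}\rangle$ and the quadratic term $\|{\bf w}\|_2^2$ to replacing the $m$-th sample of $S_2$. Here I would use Assumptions~\ref{asum1}--\ref{asum5} together with the $c$-strong convexity (with $c=2$) and admissibility of $L_q$: each of these extra terms is an $S_2$-average, so one sample replacement changes it by $O(1/|S_2|)$ directly, and strong convexity localizes the induced change of the jointly optimal $({\bf\Theta}_{\rm invert},{\bf\Theta}_{\rm fusion})$, with the relative scale of the two branches entering through $\lambda=\|{\bf\Theta}_2{\bf\Theta}_b\|_2^2/\|{\bf\Theta}_1{\bf\Theta}_f+{\bf I}\|_2^2$, the curvature rescaling between the fusion output and the underlying parameters. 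Reusing the Bregman-divergence bookkeeping of Theorem~\ref{thmss1} on the fusion/inversion parameters then bounds the \emph{expectation} of this residual functional by $4\kappa^2\sigma_{S_3}^2/(\lambda|S_2|)$. Because the same functional has bounded differences $O(1/|S_2|)$ under a one-point swap in $S_2$, McDiarmid's inequality upgrades this to a high-probability statement: with probability at least $1-\delta$ the residual stays within $\bigl(8\kappa^2\sigma_{S_3}^2/\lambda+2\sigma_{S_3}\bigr)\sqrt{\ln(1/\delta)/(2|S_2|)}$ of its mean, and adding the two pieces yields the advertised $O(1/|S_2|)$ overhead.

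The main obstacle I anticipate is the coupling inside Eq.~\eqref{eq:optim_reverse}: ${\bf\Theta}_{\rm invert}$ and ${\bf\Theta}_{\rm fusion}$ are fitted \emph{jointly} against $S_3$ while ${\bf\Theta}_b$ is frozen from a \emph{separate} fit against $S_1$ (Eq.~\eqref{eq:optim_main}), so a careless perturbation argument would entangle sensitivities across all three datasets $S_1,S_2,S_3$. The crux is to show that, once the NSFP part is isolated, the remaining dependence of the fused loss on the replaced point is channelled entirely through the $S_2$-average of per-point residuals, each of size $O(1/|S_2|)$; strong convexity of $L_q$ then confines the fusion/inversion parameter perturbation and $\lambda$ supplies the correct normalization, after which everything reduces to calculations already carried out in the proof of Theorem~\ref{thmss1} plus one application of McDiarmid's inequality.
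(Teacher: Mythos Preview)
Your proposal takes a genuinely different route from the paper. The paper's proof is much shorter and proceeds in two strokes: it first writes the linear fused prediction as $({\bf\Theta}_1{\bf\Theta}_f-{\bf\Theta}_2{\bf\Theta}_b){\bf p}+{\bf p}$ and then upper-bounds the per-point MNSF loss by
\[
\|g({\bf p})-\hat{\bf x}_k\|_2^2+\lambda\|g({\bf p})\|_2^2,\qquad g({\bf p})=({\bf\Theta}_1{\bf\Theta}_f+{\bf I}){\bf p},
\]
i.e.\ an NSFP-shaped loss plus a Tikhonov penalty with weight $\lambda=\|{\bf\Theta}_2{\bf\Theta}_b\|_2^2/\|{\bf\Theta}_1{\bf\Theta}_f+{\bf I}\|_2^2$; it then invokes Theorem~12 of Bousquet--Elisseeff (2002) as a black box, which directly delivers the $\tfrac{4\kappa^2\sigma_{S_3}^2}{\lambda|S_2|}$ stability term together with the McDiarmid fluctuation term. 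In other words, the paper does \emph{not} keep the cross term $2\langle{\bf u},{\bf w}\rangle$ at all: it drops it via an inequality and recognises the remainder as a regularised problem whose stability is already tabulated.

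Your plan instead expands $\|{\bf u}+{\bf w}\|_2^2$ exactly, isolates the NSFP branch, and controls the cross and quadratic pieces by a bespoke strong-convexity/Bregman argument followed by McDiarmid. This is more honest (you avoid the paper's step $\|A-B\|_2^2\le\|A\|_2^2+\|B\|_2^2$, which is not a valid inequality in general) and makes the dependence on $\lambda$ and $|S_2|$ visible without citing an external theorem; but it is also heavier, because you must now explicitly handle the coupling you flag in your last paragraph---the fact that the fusion parameters are optimised against the \emph{full} loss, not the NSFP branch alone---whereas the paper sidesteps this entirely by passing to the regularised surrogate before any stability analysis begins. If you want the shortest path to the stated constants, the paper's ``recognise-as-regularised-ERM then cite Bousquet--Elisseeff'' manoeuvre is the intended one; your decomposition would need the additional argument that the NSFP branch still enjoys the Theorem~\ref{thmss1} bound when the effective parameter ${\bf\Theta}_1{\bf\Theta}_f$ is selected by the \emph{joint} MNSF minimisation rather than by the pure NSFP fit.
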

\begin{proof}
With the theoretical results, we are ready to prove Theorem \ref{thmss2}. Let {\small ${{\bf{\Theta }}_{{\bf{fusion}}}} = {\left[ {{\bf{\Theta }}_1^{\top},{\bf{\Theta }}_2^{\top}} \right]^\top}$} denote the parameters of the fusion model. Considering a linear fusion function and inverter (defined by Eq. \eqref{eq:optim_reverse}), we have
\begin{equation}
\small
{\bf{\Theta }}\left[ {\begin{array}{*{20}{c}}
{\bf f}\\
{{{\bf f}^{{'}}}}
\end{array}} \right] = \left[ {\begin{array}{*{20}{c}}
{{{\bf{\Theta }}_1}}&{{{\bf{\Theta }}_2}}
\end{array}} \right]\left[ {\begin{array}{*{20}{c}}
{\bf f}\\
{{{\bf f}^{'}}}
\end{array}} \right] = \left[ {\begin{array}{*{20}{c}}
{{{\bf{\Theta }}_1}}&{{{\bf{\Theta }}_2}}
\end{array}} \right]\left[ {\begin{array}{*{20}{c}}
{{{\bf{\Theta }}_f}{\bf{p}}}\\
{ - {{\bf{\Theta }}_b}{\bf{p}}}
\end{array}} \right]
\label{thmss2eq1}
\end{equation}
Then, using Eq. \eqref{thmss2eq1}, we can rewrite the loss function $L_p$ in MNSF optimization as
\begin{equation}
\begin{array}{l}
\frac{1}{{\left| {{S_2}} \right|}}\sum\limits_{j = 1}^{\left| {{S_2}} \right|} {\left\| {\left( {{{\bf{\Theta }}_1}{{\bf{\Theta }}_f} - {{\bf{\Theta }}_2}{{\bf{\Theta }}_b}} \right){{\bf{p}}_j} + {{\bf{p}}_j} - {{{\bf{\hat x}}}_k}} \right\|_2^2} \\
 \le \left\| {{{\bf{\Theta }}_1}{{\bf{\Theta }}_f}{{\bf{p}}_j} + {{\bf{p}}_j} - {{{\bf{\hat x}}}_k}} \right\|_2^2 + \left\| {{{\bf{\Theta }}_2}{{\bf{\Theta }}_b}{{\bf{p}}_j}} \right\|_2^2\\
 = \left\| {g\left( {\bf{p}} \right) - {{{\bf{\hat x}}}_k}} \right\|_2^2 + \lambda \left\| {g\left( {\bf{p}} \right)} \right\|_2^2
\end{array}
\label{thmss2eq2}
\end{equation}
where $\lambda  = {\textstyle{{\left\| {{{\bf{\Theta }}_2}{{\bf{\Theta }}_b}} \right\|_2^2} \over {\left\| {{{\bf{\Theta }}_1}{{\bf{\Theta }}_f} + {\bf{I}}} \right\|_2^2}}}$.
With Eq. \eqref{thmss2eq2} and Theorem 12 \cite{bousquet2002stability}, we finally obtain the theoretical results shown in Theorem \ref{thmss2}.    
\end{proof}

\begin{remark}
As demonstrated in Eq. \eqref{thmss2eq2}, by employing an appropriate fusion strategy, our proposed MNSF emerges as a polynomial function of the approach utilized in NSFP, revealing a straightforward but essential variation of the NSFP algorithm.
\end{remark}

Theorem \ref{thmss2} reveals two key aspects of MNSF based on loss function in Eq. \eqref{ea22}: 1) The algorithm's generalization error is inversely proportional to the number of point clouds, indicating its efficacy with large-scale point clouds (please see Tables \ref{tab:mean_time_waymo_3d} and \ref{tab:mean_time_argoverse_3d}); 2) Theoretical analysis shows that MNSF's generalization error upper bound is on par with NSFP's when $\left| {{S_2}} \right| \to \infty$. This indicates that adding the $t\text{-}1$ frame into the optimization maintains, and even enhances, generalization, as supported by the case study in Section \ref{case_study}.

\section{Experiments}
\label{sc:exp}

In this section, we evaluate the proposed method on large-scale and realistic autonomous driving scenes. Specifically, we first introduce datasets and evaluation metrics. Then, we compare the proposed method with NSFP, FNSF, and different learning-based methods. Finally, we verify the effectiveness of each component in the proposed method with an ablation study.

~\\
\textbf{Datasets.} We focus on large-scale and lidar-based autonomous driving scenes. To this end, we conduct experiments on the Waymo Open~\cite{sun2020scalability} and the Argoverse~\cite{chang2019argoverse} datasets. Specifically, we follow previous studies \cite{li2021neural, li2023fast} to pre-process these two open-world datasets and generate the pseudo ground truth scene flow.

~\\
\textbf{Metrics.}
We evaluate the performance of the scene flow estimation based on widely used metrics from~\cite{liu2019flownet3d, mittal2020just, wu2020pointpwc, pontes2020scene, li2021neural, li2023fast}.
These metrics are introduced as follows. \\
\textbf{(1) 3D end-point error $\mathcal{E}(m)$} measures the mean absolute distance between the estimated scene flow and the pseudo ground truth scene flow. \\
\textbf{(2) Strict accuracy $Acc_5(\%)$} represents the ratio of points that the absolute point error $\mathcal{E} < 0.05$m or the relative point error $\mathcal{E}^{\prime} < 0.05$. \\
\textbf{(3) Relaxed accuracy $Acc_{10}(\%)$} represents the ratio of points that the absolute point error $\mathcal{E} < 0.1$m or the relative point error $\mathcal{E}^{\prime} < 0.1$. \\
\textbf{(4) Outlier $Outliers(\%)$} represents the ratio of points that the absolute point error $\mathcal{E} > 0.3$m or the relative point error $\mathcal{E}^{\prime} > 0.1$. In this way, $Inliers= 1 - Outliers$.\\
\textbf{(5) Angle error $\theta_\epsilon(rad)$} measures the mean angle error between the estimated scene flow and the pseudo ground truth scene flow.\\
\textbf{(6) Inference time $t(ms)$} measures the computation time for the scene flow estimation.

\begin{table*}[t]
\caption[]{\textbf{Evaluation on the Waymo Open Scene Flow dataset.} 
We follow previous studies \cite{li2021neural, li2023fast} to pre-process the Waymo Open dataset and generate 202 testing examples. Each point cloud contains 8k-144k points. The upper tabular between {\color{sol_blue}\textbf{blue bars}} are evaluated with the full point cloud as the input, and the lower tabular between {\color{beer_orange}\textbf{orange bars}} are evaluated with random samples 8,192 points as the input. The best performance has been bold, and the second-best performance has been underlined. $\uparrow$ indicates larger values are better while $\downarrow$ indicates smaller values are better. 
}
    \centering
    \begin{adjustbox}{width=\linewidth}
    \begin{tabular}{@{}clcccccccccccc@{}}
        \toprule
        &\multirow{1}{*}{\thead{\normalsize Method}}
        &\multirow{1}{*}{\thead{\normalsize Supervision}} 
        &\multirow{1}{*}{\thead{\normalsize Train set size}}
        &\multirow{1}{*}{\thead{\normalsize ${\mathcal{E}}$${(m)}\downarrow$}} 
        &\multirow{1}{*}{\thead{\normalsize ${Acc_5}$${(\%)}\uparrow$}} 
        &\multirow{1}{*}{\thead{\normalsize ${Acc_{10}}$${(\%)}\uparrow$}} 
        &\multirow{1}{*}{\thead{\normalsize $Outliers(\%)\downarrow$}} 
        &\multirow{1}{*}{\thead{\normalsize ${\theta_{\epsilon}}$${(rad)}\downarrow$}} 
        &\multicolumn{1}{c}{\thead{\normalsize $t~  (ms)\downarrow$}} \\
        \arrayrulecolor{sol_blue}\toprule[0.5ex]
        & NSFP~\cite{li2021neural} & \textit{Self} & 0 & 0.087 & 78.21
        & 90.18 
        & 37.44 
        & 0.295 
        & 15310  \\
        & NSFP (linear) & \textit{Self} & 0 & 0.153 & 60.28 
        & 75.89
        & 53.19
        & 0.353
        & 7964 \\
        & FNSF & \textit{Self} & 0 & 0.075 & \underline{85.34} 
        & \underline{92.54}
        & \underline{32.80}
        & \underline{0.286} 
        & \underline{609} \\
        & FNSF (linear) & \textit{Self} & 0 & 0.114 & 71.03 
        & 85.54 
        & 43.59
        & 0.339 
        & \textbf{451} \\
        & FNSF (joint) & \textit{Self} & 0 & 0.081 & 82.61 
        & 92.16 
        & 34.58
        & 0.291 
        & 920 \\
        & FNSF (temporal encoding) & \textit{Self} & 0 & 0.079 & 82.75 
        &  92.22
        & 0.339
        &  0.291
        & 1011 \\
        & Ours (cycle consistency) & \textit{Self} & 0 & \underline{0.071} & 81.09 
        &  91.58
        & 35.28
        &  0.300
        &  1831 \\
        & Ours & \textit{Self} & 0 & \textbf{0.066} & \textbf{87.16} 
        & \textbf{93.39} 
        & \textbf{30.89} 
        & \textbf{0.273} 
        & 989\\
        \arrayrulecolor{sol_blue}\toprule[0.5ex]
        \arrayrulecolor{beer_orange}\toprule[0.5ex]
        & FLOT~\cite{puy20flot} & \textit{Full} & 18,000 & 0.694 & 2.62 & 11.89 
        & 94.74 & 0.792 
        & 133 \\
        & 3DFlow~\cite{wang2022matters} & \textit{Full} & 18,000 & 2.088 & 1.60 & 4.92 
        & 98.94 & 1.845
        & \textbf{80} \\
        & GMSF~\cite{zhang2023gmsf} & \textit{Full} & 18,000 & 8.058 & 0.00 & 0.01
        & 99.96 
        & 1.341
        & 245  \\
        & SCOOP~\cite{lang2023scoop} & \textit{Self} & 1,800 & 0.313 & 41.86 & 65.02
        & 64.71 
        & 0.474
        & 558  \\
        & NSFP~\cite{li2021neural} (8,192 pts) & \textit{Self} & 0 & \underline{0.109} & 64.63 & 81.82 
        & 45.60 
        & 0.338
        & 4450 \\
        & FNSF~\cite{li2023fast} (8,192 pts) & \textit{Self} & 0 & 0.110 & \underline{72.78} & \underline{87.73} 
        & \underline{39.75} 
        & \underline{0.324}
        & \underline{84} \\
        & Ours (8,192 pts) & \textit{Self} & 0& \textbf{0.102} & \textbf{79.42} & \textbf{90.87} 
        & \textbf{36.51}
        & \textbf{0.321}
        & 160  \\
        \arrayrulecolor{beer_orange}\toprule[0.5ex]
        \arrayrulecolor{black}\bottomrule
    \end{tabular}
    \end{adjustbox}
    \label{tab:mean_time_waymo_3d}
\end{table*}

\begin{table*}[t]
\caption[]{\textbf{Evaluation on the Argoverse Scene Flow dataset.} 
We follow previous studies \cite{li2021neural, li2023fast} to pre-process the Argoverse dataset and generate 508 testing examples. Each point cloud contains 30k-70k points. The upper tabular between {\color{sol_blue}\textbf{blue bars}} are evaluated with the full point cloud as the input, and the lower tabular between {\color{beer_orange}\textbf{orange bars}} are evaluated with random samples 8,192 points as the input. The best performance has been bold, and the second-best performance has been underlined. $\uparrow$ indicates larger values are better while $\downarrow$ indicates smaller values are better. 
}
    \centering
    \begin{adjustbox}{width=\linewidth}
    \begin{tabular}{@{}clcccccccccccc@{}}
        \toprule
        &\multirow{1}{*}{\thead{\normalsize Method}}
        &\multirow{1}{*}{\thead{\normalsize Supervision}} 
        &\multirow{1}{*}{\thead{\normalsize Train set size}}
        &\multirow{1}{*}{\thead{\normalsize ${\mathcal{E}}$${(m)}\downarrow$}} 
        &\multirow{1}{*}{\thead{\normalsize ${Acc_5}$${(\%)}\uparrow$}} 
        &\multirow{1}{*}{\thead{\normalsize ${Acc_{10}}$${(\%)}\uparrow$}} 
        &\multirow{1}{*}{\thead{\normalsize $Outliers(\%)\downarrow$}} 
        &\multirow{1}{*}{\thead{\normalsize ${\theta_{\epsilon}}$${(rad)}\downarrow$}} 
        &\multicolumn{1}{c}{\thead{\normalsize $t~  (ms)\downarrow$}} \\
        \arrayrulecolor{sol_blue}\toprule[0.5ex]
        & NSFP~\cite{li2021neural} & \textit{Self} & 0 & 0.083 & 75.15
        & 86.49 
        & 39.13 
        & 0.361 
        & 15214  \\
        & NSFP (linear) & \textit{Self} & 0 & 0.107 & 58.39 
        & 76.39
        & 55.21
        & 0.337 
        & 2994 \\
        & FNSF & \textit{Self} & 0 & \underline{0.049} & \underline{87.04} 
        & \underline{94.08}
        & \underline{29.88}
        & \underline{0.307} 
        & \underline{472} \\
        & FNSF (linear) & \textit{Self} & 0 & 0.082 & 71.03 
        & 87.32
        & 41.64
        & 0.338 
        & \textbf{396} \\
        & FNSF (joint) & \textit{Self} & 0 & 0.050 & 84.77 
        & 93.46
        & 31.77
        & 0.319 
        & 793 \\
        & FNSF (temporal encoding) & \textit{Self} & 0 &  0.052 &  85.14
        &  93.26
        & 31.93
        &  0.322
        & 879 \\
        & Ours (cycle consistency) & \textit{Self} & 0 & 0.054 &  83.26
        &  92.36
        & 32.81
        &  0.325
        & 1432 \\
        & Ours & \textit{Self} & 0 & \textbf{0.044} & \textbf{88.75} 
        & \textbf{94.83} 
        & \textbf{28.86} 
        & \textbf{0.299} 
        & 851\\
        \arrayrulecolor{sol_blue}\toprule[0.5ex]
        \arrayrulecolor{beer_orange}\toprule[0.5ex]
        & FLOT~\cite{puy20flot} & \textit{Full} & 18,000 & 0.767 & 2.33 & 9.91 
        & 96.19 & 0.971 
        & 130 \\
        & 3DFlow~\cite{wang2022matters} & \textit{Full} & 18,000 & 1.672 & 3.08 & 9.22 
        & 96.92 & 1.845
        & \textbf{82} \\
        & GMSF~\cite{zhang2023gmsf} & \textit{Full} & 18,000 & 9.089 & 0.00 & 0.01
        & 99.99 
        & 1.781
        & 247  \\
        & SCOOP~\cite{lang2023scoop} & \textit{Self} & 1,800 & 0.248 & 39.09 & 62.56
        & 68.81 
        & 0.481
        & 542  \\
        & NSFP~\cite{li2021neural} (8,192 pts) & \textit{Self} & 0 & \underline{0.077} & 63.39 & 81.26
        & 46.72 
        & \underline{0.366}
        & 4390 \\
        & FNSF~\cite{li2023fast} (8,192 pts) & \textit{Self} & 0 & 0.081 & \underline{75.87} & \underline{87.85}
        & \underline{39.10} 
        & 0.372
        & \underline{83} \\
        & Ours (8,192 pts) & \textit{Self} & 0& \textbf{0.069} & \textbf{82.10} & \textbf{92.93} 
        & \textbf{32.86}
        & \textbf{0.344}
        & 157 \\
        \arrayrulecolor{beer_orange}\toprule[0.5ex]
        \arrayrulecolor{black}\bottomrule
    \end{tabular}
    \end{adjustbox}
    \label{tab:mean_time_argoverse_3d}
\end{table*}

~\\
\textbf{Implementation details.}
We introduce details of implementation for each compared method.\\
\textbf{(1) NSFP~\cite{li2021neural}.} 
We follow NSFP~\cite{li2021neural} to use an 8-layer MLP to estimate the scene flow. Specifically, the weights of the MLP are randomly initialized before optimizing each pair of point clouds. \\
\textbf{(2) NSFP (linear).} 
Following \cite{li2023fast}, we implement NSFP via a linear model with complex positional encodings, namely (\textbf{NSFP (linear)}). Specifically, using 8 linear layers and computing the Kronecker product of the per-axis encoding.\\
\textbf{(3) FNSF \cite{li2023fast}.} For a fair comparison, we implement FNSF with an 8-layer MLP. The grid cell size of FNSF is set to 0.1 meters. \\
\textbf{(4) FNSF (linear).} We also implement FNSF via a linear model with complex positional encodings, namely (\textbf{FNSF (linear)}). The settings of the linear model and positional encodings are the same as in \textbf{NSFP (linear).} \\
\textbf{(5) FNSF (joint).} To demonstrate the necessity of a dedicated strategy for utilizing temporal information, we use a single FNSF to jointly estimate the previous flow ($t\text{-}1 \,{\rightarrow}\, t$) and the current flow ($t \,{\rightarrow}\, t\text{+}1$), namely (\textbf{FNSF (joint)}).\\
\textbf{(6) FNSF (temporal encoding).} Following \cite{zheng2023neuralpci}, we also use an FNSF to estimate the previous flow ($t\text{-}1 \,{\rightarrow}\, t$) and the current flow ($t \,{\rightarrow}\, t\text{+}1$) with temporal encoding. Specifically, such a model encodes both the spatial and temporal coordinates of multi-frame point clouds, namely (\textbf{FNSF (temporal encoding)}).\\
\textbf{(7) Ours.} We implement models $g_f$ and $g_b$ with 8-layer MLPs. These two models are independently trained. We simplify the model $g_{\rm invert}$ as a constant model and adopt a 3-layer MLP as the fusion model $g_{\rm fusion}$. The architecture of the fusion model is discussed in Section~\ref{sc:exp:ablation}. The grid cell size of FNSF is consistently set to 0.1 meters.\\
\textbf{(8) Ours (cycle consistency).} We also implement the proposed method with a cycle consistency constraint in \cite{li2021neural}, which aims to improve the smoothness of the scene flow estimation. To this end, the cycle consistency constraint controls the distances between the estimated point cloud and the original point cloud.\\
\textbf{(9) \textbf{FLOT~\cite{puy20flot}}, \textbf{3DFlow \cite{wang2022matters}}, and GMSF~\cite{zhang2023gmsf}} are supervised learning-based methods trained on the synthetic FlyingThings3D~\cite{mayer2016large} and the KITTI~\cite{menze2015object} datasets. On the other hand, \textbf{SCOOP \cite{lang2023scoop}} is a self-supervised method. These models are directly evaluated with pre-trained models and official codes released by the authors.  

All experiments are conducted on a computer with a single NVIDIA RTX 3090Ti GPU and a Gen Intel (R) 24-Core (TM) i9-12900K CPU. 
We implement all compared models based on PyTorch.

\begin{figure*}[t]
\centering
\begin{tabular}{c c}
\multicolumn{2}{c}{\includegraphics[width=0.99\linewidth]{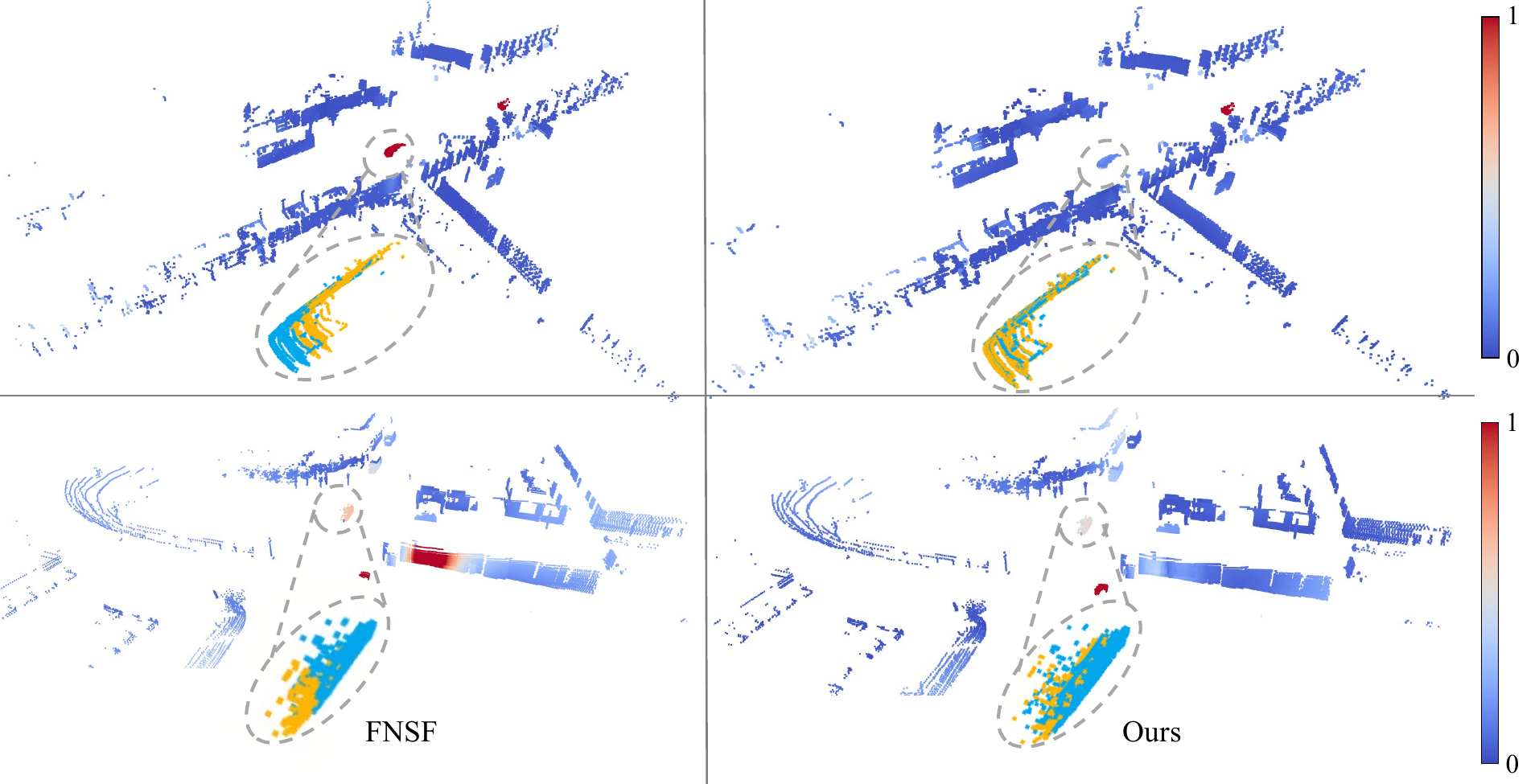}} \\ 
\end{tabular}
\caption{Visual comparison between FNSF and the proposed method on the Argoverse dataset. For each point, color represents the normalized 3D end-point error $\mathcal{E}$. In this way, blue indicates the estimation of the flow is accurate. The detailed view demonstrates two point clouds aligned by the estimated flow.}
\label{fig:fig3}
\end{figure*}

\subsection{Comparison of Performance}
\label{sc:exp:benchmark}
We evaluate and compare the proposed method with various state-of-the-art methods on the Waymo Open (Table \ref{tab:mean_time_waymo_3d}) and the Argoverse (Table \ref{tab:mean_time_argoverse_3d}) datasets. For simplicity, we represent results on the Waymo Open (xx) and the Argoverse (yy) as xx/yy in the following paragraph. Figure \ref{fig:fig3} shows the visual comparison between FNSF and the proposed method on the Argoverse dataset.

~\\
\textbf{Dense scene flow estimation.}
The ability to estimate dense scene flow is crucial, because each LiDAR scan often contains 100K - 1000K points in real-world autonomous driving scenarios \cite{jund2021scalable}. Therefore, we evaluate scene flow methods with the full point cloud as the input. NSFP achieves 78.21/75.15\% strict accuracy, but the computation time costs 15310/15214 ms. To accelerate the optimization process, NSFP (linear) replaces the MLP with a linear model and positional encoding. In this way, NSFP (linear) speedups the optimization process almost two times and achieves worse performance compared to NSFP, \emph{i.e.}, accuracy decreases by about 15\%. FNSF achieves almost 30$\times$ speedup and improves the strict accuracy to 85.34/87.04\%. Meanwhile, FNSF (linear) slightly accelerates FNSF, suffering from a relatively large drop in performance. 

All the above methods only use two frames ($t$ and $t\text{+}1$) and neglect to utilize previous frames. To this end, FNSF (joint) estimates the previous flow ($t\text{-}1 \,{\rightarrow}\, t$) and the current flow ($t \,{\rightarrow}\, t\text{+}1$) at the same time. However, such an intuitive scheme obtains worse strict accuracy (82.61/84.77\%) than FNSF. The interpretation of this phenomenon is that a single MLP fails to encode different motion fields simultaneously. In other words, points in the frame $t\text{-}1$ and the frame $t$ may have the same position $(x, y, z)$ with different motion fields. Thus, it is difficult for the MLP to learn from these inconsistent samples \cite{liu2023towards}. In contrast, the proposed method exploits valuable temporal information from previous frames and outperforms FNSF and FNSF (joint). In this way, the proposed method achieves a balance between performance and inference time.

~\\
\textbf{OOD generalizability.}
In order to conduct a fair comparison with learning-based methods~\cite{puy20flot, wang2022matters, zhang2023gmsf, lang2023scoop}, we further extend the proposed method to process a reduced number of points, \emph{i.e.}, 8,192 points. Current learning-based methods could only process a fixed and small number of points due to their cumbersome networks \cite{peng2023delflow, zhang2023gmsf}, \emph{e.g.}, transformer-based architectures. To this end, these methods have to downsample or divide the entire lidar scan into smaller subsets/regions. Then, these learning-based methods can be iteratively used to predict the scene flow of each subset point cloud. In this way, such a compromising point cloud pre-process operation limits the generalization ability of learning-based methods on the large-scale OOD data and may lead to out-of-memory issues \cite{jund2021scalable, chodosh2023re}. 

Table~\ref{tab:mean_time_waymo_3d} and Table~\ref{tab:mean_time_argoverse_3d} show that supervised learning-based methods, including FLOT, 3DFlow, and GMSF, achieve limited performance on large-scale autonomous driving Waymo Open and Argoverse datasets. It is because of the huge domain shift between the training data and testing data~\cite{pontes2020scene, li2021neural, najibi2022motion, dong2022exploiting, jin2022deformation, chodosh2023re}. In contrast, the self-supervised SCOOP outperforms its supervised counterparts and achieves 41.86/39.09\% strict accuracy. However, the performance of SCOOP is still inferior to NSFP and FNSF. The proposed method outperforms FNSF by exploiting and utilizing temporal information from multi-frames. Although the computation cost of 3DFlow is the lowest among all compared methods, the proposed method achieves a balance between the performance and computational complexity. These experimental results and analysis indicate that the proposed method is robust for OOD data and is applicable to real-world autonomous driving scenarios.

\setlength{\tabcolsep}{3pt} 
\begin{table*}[t]
	\caption{\textbf{Performance of the proposed method with different components on the Waymo Open dataset.} All compared methods are evaluated with the full point cloud as the input. $\uparrow$ indicates larger values are better while $\downarrow$ indicates smaller values are better.}
		\centering
		\begin{tabular}[b]{c|c c c|cccccc}
			\toprule
			Model & Multi-frame & $g_{\rm invert}$ & $g_{\rm fusion}$ & ${\mathcal{E}}$${(m)}\downarrow$ & ${Acc_{5}}$${(\%)}\uparrow$ &${Acc_10}$${(\%)}\uparrow$
        &$Outliers(\%)\downarrow$ 
        &${\theta_{\epsilon}}$${(rad)}\downarrow$ 
        &$t~  (ms)\downarrow$ 
			\\
			\midrule
      		FNSF &  & &  & 0.075  & 85.34  & 92.54
        & 32.80
        & 0.286
        & \textbf{609}
			\\
   			(a) &  &  & $\checkmark$ & 0.083 & 84.06 &92.58 & 33.52 & 0.325& 734
			\\
			(b) & $\checkmark$ & $\checkmark$ &  & 0.070 &82.94 & 92.64 & 32.89 & 0.284 & 613
			\\
			(c) & $\checkmark$ &  & $\checkmark$ & 0.088 & 78.96 & 88.97 & 37.43 & 0.320 & 987 
			\\
			(d) & $\checkmark$ & $\checkmark$ & $\checkmark$ & \textbf{0.066} & \textbf{87.16} & \textbf{93.39} & \textbf{30.89} & \textbf{0.273} & 989
            \\
			\bottomrule
		\end{tabular}
	\label{tab:ablation}
\end{table*}

~\\
\textbf{Discussions about learning-based methods.}
Learning-based scene flow methods \cite{puy20flot, liu2019flownet3d, liu2019meteornet, wang2022matters, zhang2023gmsf, peng2023delflow} have exhibited remarkable speed and performance on small-scale synthetic datasets, \emph{e.g.}, KITTI Scene Flow\footnote{Point clouds in the KITTI dataset are limited to a specific range (35-meter within the scene center) with a small number of points (2048 or 8192 points).}~\cite{menze2015object} and FlyingThings3D~\cite{mayer2016large} datasets. However, these methods heavily rely on the high consistency between training scenarios and testing scenarios \cite{pontes2020scene, li2021neural, najibi2022motion, dong2022exploiting, jin2022deformation, chodosh2023re}, \emph{e.g.}, viewpoints and coordinate systems. Thus, it is a challenge to use these learning-based methods in real-world applications, where training scenarios and testing scenarios are often inconsistent. To this end, we propose a multi-frame scheme based on FNSF, instead of learning based-methods.

~\\
\textbf{Cycle consistency constraint.} We conduct experiments to figure out whether the proposed method can be further improved by the cycle/temporal consistency loss, because it is common practice to encourage the trajectory of point cloud to be smooth \cite{liu2019flownet3d, mittal2020just, wang2022neural} for multi-frame point clouds, by constraining the distance between point clouds from different frames. To this end, a temporal consistency loss or a cycle consistency loss is usually used during the training process of point cloud models. Table \ref{tab:mean_time_waymo_3d} and Table \ref{tab:mean_time_argoverse_3d} show that adding the cycle consistency loss decreases the performance of the proposed method, \emph{i.e.}, strict accuracy decreasing from 87.16/88.75\% to 81.09/83.26\%. In addition, the cycle consistency loss significantly increases the computational complexity, and the inference time costs 1831 ms. Thus, the cycle/temporal consistency loss is not necessary in our case. Such a finding also verifies the empirical observation in \cite{li2023fast}. Therefore, we implicitly enforce cycle/temporal smoothness, instead of explicitly constraining cycle/temporal smoothness.

~\\
\textbf{Temporal encoding.} We also compare the proposed multi-frame scheme with the temporal encoding strategy, because temporal encoding is useful to process point cloud sequences \cite{wang2022neural, zheng2023neuralpci}. As aforementioned, it is difficult for FNSF (joint) to distinguish point clouds from different frames. To mitigate this issue, we use temporal encoding and concatenate the temporal coordinate into the spatial coordinate, \emph{i.e.}, obtaining a 4D point cloud. In this way, we construct FNSF (temporal encoding) to jointly estimate the previous flow ($t\text{-}1 \,{\rightarrow}\, t$) and the current flow ($t \,{\rightarrow}\, t\text{+}1$). Table \ref{tab:mean_time_waymo_3d}) and Table \ref{tab:mean_time_argoverse_3d} show that FNSF (temporal encoding) slightly outperforms FNSF (joint). Such experimental result indicates that using temporal encoding partially addresses the issue in FNSF (joint) with limited performance improvement. However, FNSF (joint) is still \textit{inferior to} the proposed method. The interpretation is that temporal encoding may be more suitable for long sequence point clouds than short sequence point clouds \cite{wang2022neural}. Therefore, the proposed method provides a promising solution to multi-frame point cloud scene flow estimation.

\setlength{\tabcolsep}{1pt} 
\begin{table}[t]
\caption[]{\textbf{Performance of different architectures of the temporal fusion model on the Waymo Open dataset.} All compared methods are evaluated with the full point cloud as the input. $\uparrow$ indicates larger values are better while $\downarrow$ indicates smaller values are better. 
}
    \centering
    \begin{tabular}{@{}clcccccccc@{}}
        \toprule
        &\multirow{1}{*}{\thead{\normalsize Operation}}
        &\multirow{1}{*}{\thead{\normalsize ${\mathcal{E}}$${(m)}\downarrow$}} 
        &\multirow{1}{*}{\thead{\normalsize ${Acc_5}$${(\%)}\uparrow$}} 
        &\multirow{1}{*}{\thead{\normalsize ${Acc_{10}}$${(\%)}\uparrow$}} 
        &\multirow{1}{*}{\thead{\normalsize ${\theta_{\epsilon}}$${(rad)}\downarrow$}}  \\
        \midrule
        & Mean & 0.070 & 82.55 & 92.64
        & 0.285  \\
        & Weighted sum & 0.097 & 84.18 & 92.42 
        & 0.286\\
        & MLP & \textbf{0.066} & \textbf{87.16} & \textbf{93.39}
        & \textbf{0.273} \\
        \arrayrulecolor{black}\bottomrule
    \end{tabular}
    \label{tab:fusion}
\end{table}

\begin{table}[t]
\caption[]{\textbf{Performance of different depths of the temporal fusion model on the Waymo Open dataset.} All compared methods are evaluated with the full point cloud as the input. $\uparrow$ indicates larger values are better, while $\downarrow$ indicates smaller values are better. 
}
    \centering
    \begin{adjustbox}{width=0.9\linewidth}
    \begin{tabular}{@{}clcccccccc@{}}
        \toprule
        &\multirow{1}{*}{\thead{\normalsize Setting}}
        &\multirow{1}{*}{\thead{\normalsize ${\mathcal{E}}$${(m)}\downarrow$}} 
        &\multirow{1}{*}{\thead{\normalsize ${Acc_5}$${(\%)}\uparrow$}} 
        &\multirow{1}{*}{\thead{\normalsize ${Acc_{10}}$${(\%)}\uparrow$}} 
        &\multirow{1}{*}{\thead{\normalsize ${\theta_{\epsilon}}$${(rad)}\downarrow$}}  \\
        \midrule
        & 2 layers & 0.069 & 86.84  & 93.07
        &  0.286 \\
        & 3 layers & \textbf{0.066} & \textbf{87.16} & \textbf{93.39}
        & \textbf{0.273}  \\
        & 5 layers & 0.068 & 86.33 & 93.16 
        & 0.281\\
        & 7 layers & 0.107 & 83.50 & 92.30 
        & 0.303\\
        \arrayrulecolor{black}\bottomrule
    \end{tabular}
    \end{adjustbox}
    \label{tab:fusion_layer}
\end{table}

\begin{table}[t]
\caption[]{\textbf{Performance of different frame numbers on the Waymo Open dataset.} All compared methods are evaluated with the full point cloud as the input. $\uparrow$ indicates larger values are better, while $\downarrow$ indicates smaller values are better. 
}
    \centering
    \begin{adjustbox}{width=0.9\linewidth}
    \begin{tabular}{@{}clcccccccc@{}}
        \toprule
        &\multirow{1}{*}{\thead{\normalsize Setting}}
        &\multirow{1}{*}{\thead{\normalsize ${\mathcal{E}}$${(m)}\downarrow$}} 
        &\multirow{1}{*}{\thead{\normalsize ${Acc_5}$${(\%)}\uparrow$}} 
        &\multirow{1}{*}{\thead{\normalsize ${Acc_{10}}$${(\%)}\uparrow$}} 
        &\multirow{1}{*}{\thead{\normalsize ${\theta_{\epsilon}}$${(rad)}\downarrow$}}  \\
        \midrule
        & 2 frames & 0.083 & 84.46 & 92.58
        &  0.313 \\
        & 3 frames & \textbf{0.066} & 87.16 & \textbf{93.39}
        & \textbf{0.273}  \\
        & 4 frames & 0.070 & \textbf{87.64} & 93.38 
        & 0.279\\
        & 5 frames & 0.085 & 87.48 & 93.31 
        & 0.291\\
        \arrayrulecolor{black}\bottomrule
    \end{tabular}
    \end{adjustbox}
    \label{tab:frame_numbers}
\end{table}

\begin{figure*}[t]
\centering
\includegraphics[width=0.99\linewidth]{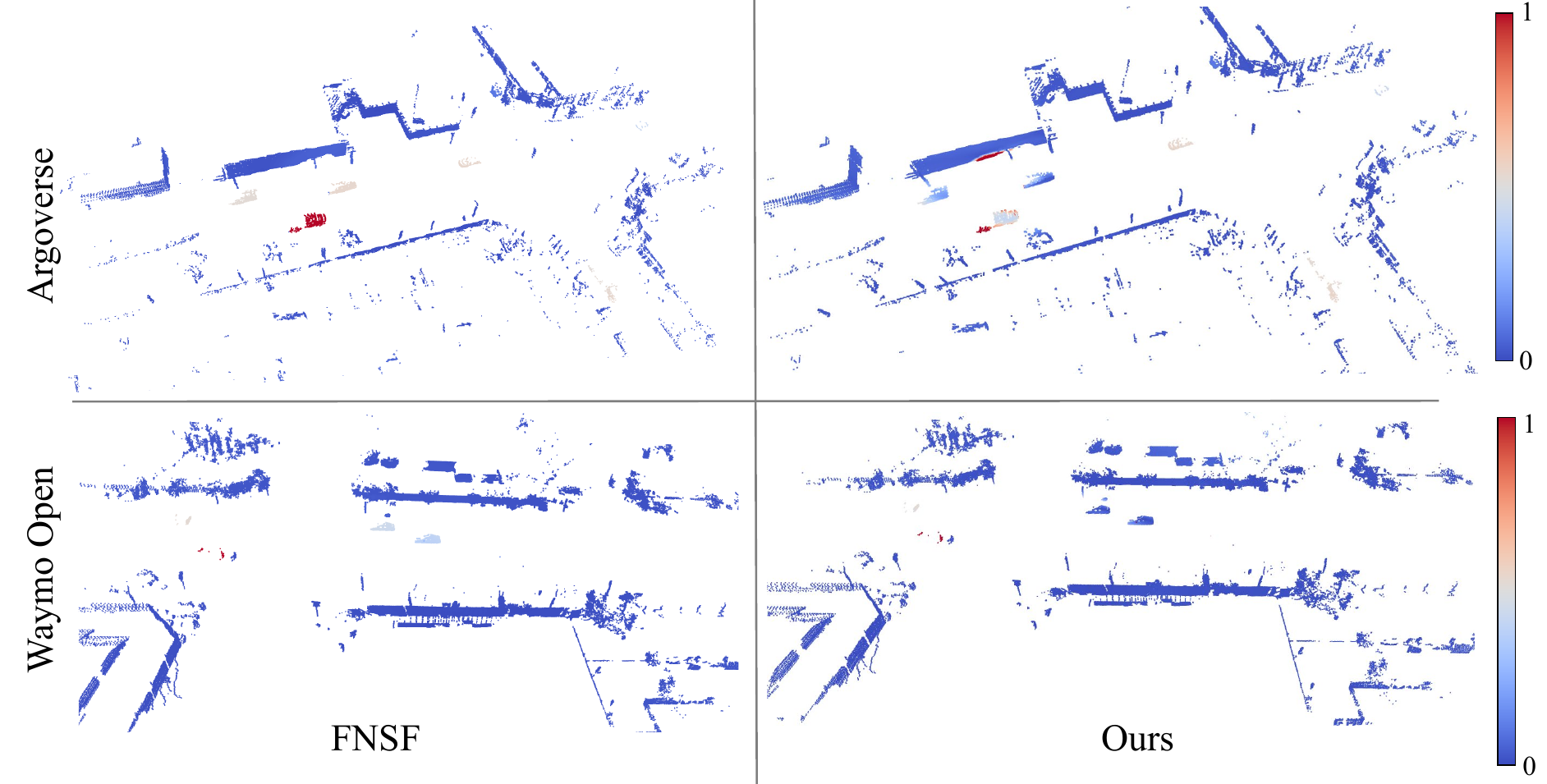} \\ 
\caption{Fast motion cases on the Argoverse and the Waymo Open datasets. Color represents the normalized 3D end-point error $\mathcal{E}$ for each point. In other words, blue indicates the estimation of the flow is accurate.}
\label{fig:fig5}
\end{figure*}

\subsection{Ablation Study}
\label{sc:exp:ablation}

In this section, we first conduct comprehensive experiments to verify the effectiveness of each component in the proposed method on the Waymo Open dataset. Specifically, given the forward and backward flows, the following four models are evaluated: (a) use the model $g_{\rm fusion}$ to refine the forward flow; (b) use the model $g_{\rm invert}$ to invert the backward flow, then directly compute the average of the inverted flow and the forward flow as the fused flow; (c) use the model $g_{\rm fusion}$ to directly fuse the forward and backward flows; (d) equip all components, \emph{i.e.}, the proposed method.

Table \ref{tab:ablation} shows that each component is effective. Model (a) achieves comparable performance with FNSF without exploiting valuable information from previous frames. By coarsely using previous frames, model (b) slightly outperforms FNSF. Although model (c) uses information from previous frames, it performs worse than FNSF. This is because the forward and backward flows represent opposite directions and conflict with each other. Therefore, the direct fusion leads to performance degradation. Combining an inverter model $g_{\rm invert}$ and a fusion model $g_{\rm fusion}$ (\emph{i.e.}, model (d)), achieves better performance than FNSF.

~\\
\textbf{Architecture of the temporal fusion model.} We provide results of the proposed method with different architectures of the temporal fusion model. The temporal fusion model is an average operation, a learnable matrix $W$, and an MLP, respectively. Specifically, mean denotes directly computing the average of the forward and the inverted backward scene flow, \emph{i.e.}, ($\mathbf{f} + \mathbf{f^{'}})/2$. The weighted sum represents using the learnable matrix $W$ to adjust the weights between the forward and the inverted backward scene flow, \emph{i.e.}, $W\mathbf{f} + (I-W)\mathbf{f^{'}}$. In comparison, these two flows are the input to the MLP, and the output is the fused flow. Table \ref{tab:fusion} shows that setting the temporal fusion model as an MLP achieves optimal performance.

~\\
\textbf{Depth of the temporal fusion model.} We illustrate the results of the proposed method with different depths of the temporal fusion model $g_{\rm fusion} \left(\cdot \, \mathbf{\Theta_{\rm fusion}} \right)$. Specifically, the temporal fusion model is set as 2-layer MLP, 3-layer MLP, 5-layer MLP, and 7-layer MLP. Table \ref{tab:fusion_layer} shows that a 3-layer MLP temporal fusion model achieves the optimal performance. Therefore, a relative small layer number of the temporal fusion model could better accomplish the fusion procedure.

~\\
\textbf{Number of frames.} We demonstrate the results of the proposed method with different frame numbers. Specifically, we have point clouds from $t\text{-}(m\text{-}2)$, $\cdots$, $t\text{-}1$, $t$, and $t\text{+}1$ for the $m$-frame setting. We independently train $m\text{-}1$ models, predicting the forward flow $t \,{\,{\rightarrow}\,}\, t\text{+}1$ and $m\text{-}2$ backward flow $t \,{\rightarrow}\, t\text{-}1$, $t \,{\rightarrow}\, t\text{-}2$, $\cdots$, $t \,{\rightarrow}\, t\text{-}(m\text{-}2)$, respectively. Finally, we use a fusion model to 
estimate the final flow, \emph{i.e.}, $t \,{\rightarrow}\, t\text{+}1$. Table \ref{tab:frame_numbers} shows that the multi-frame setting outperforms the 2-frame setting. It verifies that exploiting temporal information from previous frames is useful for scene flow estimation. Table \ref{tab:frame_numbers} also reveals that the contribution of the temporal information is incremental, when the number of frames is larger than three. Such a finding is consistent with the previous work in the optical flow estimation \cite{ren2019fusion}.

\begin{figure}[t]
\centering
\includegraphics[width=0.99\linewidth]{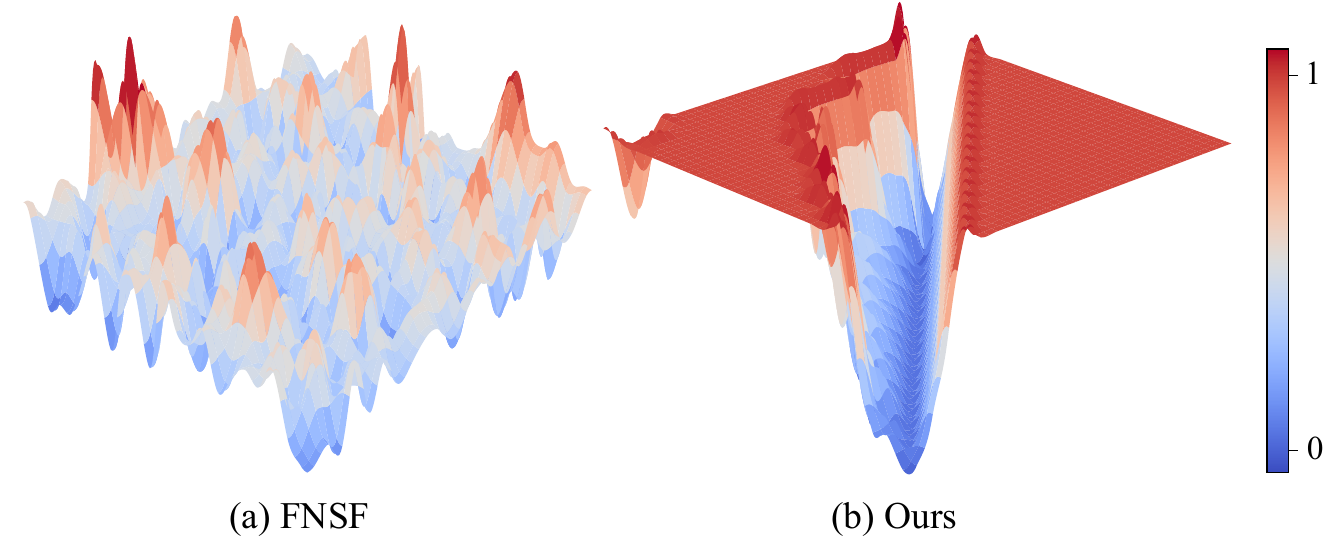} \\ 
\caption{The loss landscapes of FNSF and the proposed method on the Argoverse dataset. Color represents the testing loss. The proposed method eases the scene flow optimization process and has a more flat minimum.}
\label{fig:fig4}
\end{figure}

\subsection{Case study}
\label{case_study}

~\\
\textbf{Fast motion cases.}
The ability to estimate dense scene flow of fast motion is important in real-world autonomous driving. Therefore, we demonstrate the error of the scene flow estimation in fast motion cases. Specifically, we select two fast motion cases from Argoverse and Waymo Open datasets based on the pseudo ground truth scene flow, respectively. Figure \ref{fig:fig5} shows that although the proposed method uses temporal information from previous frames, it can still accurately estimate the fast motion field. Such experimental results verify the robustness of the proposed method in fast motion cases.

~\\
\textbf{Loss landscape.} 
To further analyze the optimization difficulty of the neural scene flow estimation, we demonstrate the loss landscape of FNSF and the proposed method in Figure \ref{fig:fig4}. It is well known that the high flatness of the minima indicates good generalization ability \cite{li2018visualizing, keskar2016large, ma2021rethinking, chen2023saks}. Figure \ref{fig:fig4} shows that the minima of the proposed method are more flat than FNSF. Therefore, the proposed method eases the scene flow optimization process and has better generalization ability, which also verifies the correctness of Theorem \ref{thmss2}.


\section{Conclusion}
In this paper, we theoretically analyze NSFP's generalization ability, finding that its generalization error decreases with more point clouds. Based on such theoretical findings, we can explain its effectiveness for large-scale point cloud scene flow estimation. Inspired by the theoretical findings, we propose a simple and effective multi-frame scene flow estimation scheme, which is dedicated to large-scale OOD autonomous driving scenarios. More crucially, we theoretically analyze the generalization ability of the proposed multi-frame method. The generalization error of the proposed method is bounded, indicating that adding multiple frames to the optimization process does not hurt its generalizability. Meanwhile, the case study demonstrates that such a multi-frame scheme eases the optimization process and can estimate fast motion fields. Both theoretical analysis and experimental results show the superiority of the proposed method in large-scale OOD autonomous driving applications.

\ifCLASSOPTIONcaptionsoff
  \newpage
\fi

\bibliographystyle{IEEE}
\bibliography{main.bbl}

\end{document}